\newtheorem{prop}{Proposition}[]
\newlength{\sfigwidth}
\newcommand{\LL}{\mathcal{L}}
\newcommand{\TT}{\mathcal{T}}
\newcommand{\todo}[1]{\textcolor{red}{TODO: #1}} 
\newcommand{\hidemelater}[1]{{#1}}
\renewcommand{\todo}[1]{} 
\renewcommand{\hidemelater}[1]{} 
\algrenewcommand\algorithmicindent{0.5em} 
\algnewcommand{\LineComment}[1]{\(\triangleright\) \textit{#1}}
\def\eqref#1{equation~\ref{#1}}
\def\1{\bm{1}}
\def\vtheta{{\bm{\theta}}}
\def\vphi{{\bm{\phi}}}
\DeclareMathAlphabet{\mathsfit}{\encodingdefault}{\sfdefault}{m}{sl}
\SetMathAlphabet{\mathsfit}{bold}{\encodingdefault}{\sfdefault}{bx}{n}
\DeclareMathOperator*{\argmin}{arg\,min}
\DeclareMathOperator{\sign}{sign}
\patchcmd\maketitle{\setcounter{footnote}{0}}{}{}{}
\patchcmd\maketitle{%
 \renewcommand\thefootnote{\@fnsymbol\c@footnote}}{\AdaptNote\thanks\multthanks}{}{}
\patchcmd\maketitle{%
 \def\@makefnmark{\rlap{\@textsuperscript{\normalfont\@thefnmark}}}}{}{}{}
\definecolor{mydarkblue}{rgb}{0,0.08,0.45}
\titlespacing{\section}{0pt}{2ex}{1ex}
\titlespacing{\subsection}{0pt}{1ex}{0ex}
\titlespacing{\subsubsection}{0pt}{0.5ex}{0ex}
\newcommand{\reducevspace}{\vspace*{-0.5em}} 
\newcommand{\mainlabel}{{main}}
\newcommand{\auxlabel}{{aux}}
\newcommand{\maintask}{\TT_{\mainlabel}}
\newcommand{\auxtask}{\TT_{\auxlabel}}
\newcommand{\mainloss}{\LL_{\mainlabel}}
\newcommand{\auxloss}{\LL_{\auxlabel}}
\newcommand{\iterid}{t}
\newcommand{\iterrltime}{{t'}}
\newcommand{\iterrl}{i} 
\newcommand{\iter}{{(\iterid)}}
\newcommand{\iterminusone}{{(\iterid-1)}}
\newcommand{\iterplusone}{{(\iterid+1)}}
\newcommand{\paramshared}{\vtheta}
\newcommand{\paramsall}{\mathbf{\Theta}}
\newcommand{\paramsone}{\vphi_{\mainlabel}}
\newcommand{\paramstwo}{\vphi_{\auxlabel}}
\title{
Adapting Auxiliary Losses Using Gradient Similarity
}
\author{Yunshu Du\thanks{Equal Contribution} ~\thanks{Washington State University. Work done during an internship at DeepMind.} , \
Wojciech M. Czarnecki\footnotemark[1] ~\thanks{DeepMind} , \  
Siddhant M. Jayakumar\footnotemark[3] , \
Mehrdad Farajtabar\footnotemark[3] , \AND
Razvan Pascanu\footnotemark[3] , \
Balaji Lakshminarayanan\thanks{Google Research, Brain Team. Work done while at DeepMind.}
\\ \\
{
\small \texttt{yunshu.du@wsu.edu}, \texttt{\{lejlot, sidmj, farajtabar, razp, balajiln\}@google.com}
}
}
\begin{document}

\maketitle 

\begin{abstract}
One approach to deal with the statistical inefficiency of neural networks is to rely on additional auxiliary losses that help build useful representations. However, it is not always trivial to know if an auxiliary task will be helpful or, more importantly, when it could start hurting the main task. We investigate whether the cosine similarity between the gradients of the different tasks provides a signal to detect when an auxiliary loss is helpful to the main loss. We prove theoretically that our approach is at least guaranteed to converge to the critical points of the main task. Then, we demonstrate the practical usefulness of the proposed algorithm in both supervised learning and reinforcement learning domains. Finally, we discuss the potential implications of relying on this heuristic as a measurement of task similarity.

\end{abstract}

\section{Introduction}

Neural networks are powerful function approximators that have excelled on a wide range of tasks \citep{simonyan2014very, mnih2015human, he2016deep, alphago, vaswani2017attention}. Despite the state of the art results across domains, they remain data-inefficient and expensive to train. In supervised learning, large deep learning benchmarks with millions of examples are needed for training \citep{russakovsky2015imagenet} and the additional implication of requiring human intervention to label a large dataset can be prohibitively expensive. In reinforcement learning (RL), agents typically consume millions of frames of experiences before learning to act reasonably in complex environments \citep{alphago, impala}, which not only puts pressure on computing power but also makes particular domains (e.g., robotics) impractical.  

Different techniques have been studied for improving data efficiency, from data augmentation \citep{krizhevsky2012imagenet, simonyan2014very, hauberg2016dreaming} to multi-task learning \citep{distral, kendall2017multi, gradnorm, popart, sener2018multi} to transfer learning \citep{taylor2009transfer, pan2010survey}. In this work, we focus on a particular type of transfer learning: transferring knowledge of an \emph{auxiliary task} to a \emph{main task}. We assume that besides the main task, one has access to one or more additional auxiliary tasks that share some unknown structures with the main task. To improve data efficiency, these additional tasks can be used as auxiliary losses to transfer knowledge to the main task. Note that \textit{only the performance of the main task is of interest}, even though the model is trained simultaneously on all tasks; any improvement on the auxiliary losses is useful only to the extent that it helps learning features or behaviors for the main task. %

Auxiliary tasks have been shown to work well for reinforcement learning \citep[e.g.,][]{zhang2016augmenting,jaderberg2017unreal, mirowski2017learning,  papoudakis18, li2019cross}. 
The success of these approaches depend on how well aligned the auxiliary losses are with the main task. Knowing this apriori is typically non-trivial and the usefulness of an auxiliary task can change through the course of training. In this work, we explore a simple yet effective heuristic for measuring the similarity between an auxiliary task and the main task of interest (given the value of their parameters) using gradient information.

\section{Notation and Problem Description}
\label{sec:notation}

Assume we have a main task $\maintask$ and some update rule induced by an auxiliary task $\auxtask$. This can be given by the gradients of an auxiliary loss $\auxloss$ (though our derivation holds also when there is no such loss, as will be discussed later). Let us assume the main task induces a loss $\mainloss$. We care only about maximizing performance on $\maintask$ while $\auxtask$ is an auxiliary task which is not of direct interest. %
The goal is to devise an algorithm that/ can automatically: 
\begin{enumerate}
    \item leverage $\auxtask$ as long as it is helpful to $\maintask$, %
    \item detect when $\auxtask$ becomes harmful to $\maintask$ and block negative transfer to recover the performance of training only on $\maintask$. %
\end{enumerate} 

We propose to parameterize the solution for $\maintask$ and $\auxtask$ by two neural networks, $f(\cdot, \paramshared, \paramsone)$ and $g(\cdot, \paramshared, \paramstwo)$, where they share a subset of parameters (denoted by $\paramshared$) and have their own set of parameters (denoted by $\paramsone$ and $\paramstwo$ respectively). Generally, the auxiliary loss literature proposes to minimize weighted losses of the form 
\begin{align}
    \argmin_{\paramshared,\paramsone,\paramstwo} \mainloss(\paramshared, \paramsone) + \lambda \auxloss(\paramshared, \paramstwo)
    \label{eq:old-objective}
\end{align}
under the intuition that modifying $\paramshared$ to minimize $\auxloss$ will improve $\mainloss$ if the two tasks are sufficiently related. There are two potential problems with minimizing \eqref{eq:old-objective}: (i) $\lambda$ is usually kept as a constant which is not adaptable where the usefulness of $\auxtask$ changes during training (e.g., helps initially but hurts later), and (ii) the \emph{relatedness} of the two tasks are often ambiguous to quantify.

Ideally one would want to modulate the weight $\lambda$ at each learning iteration $\iterid$ such that either final performance on main task is maximized or convergence speed of the main task is maximized.\footnote{Note that both quantities might be ill-defined without considering a fixed budget of iterations. Also, while they are correlated, the two maximization problems can lead to different behaviors.}

A greedy variant of this objective is to modulate  $\lambda$ at each learning iteration $\iterid$ by how useful $\auxtask$ is for $\maintask$ given $\paramshared^\iter, \paramsone^\iter, \paramstwo^\iter$. That is, at each optimization iteration, we want to efficiently approximate the solution to 
\begin{align}
        \argmin_{\lambda^\iter}\mainloss\Bigl( \paramshared^\iter -\alpha \nabla_\paramshared (\mainloss + \lambda^\iter \auxloss),  
        \paramsone^\iter - \alpha \nabla_{\paramsone} \mainloss\Bigr).
    \label{eq:objective}
\end{align}
Note that the input space of $\maintask$ and $\auxtask$ do not have to match. In particular, $\auxtask$ does not need to be defined for an input of $\maintask$ or the other way around.\footnote{In the supervised learning case when the input features are shared, this resembles \citet{quadrianto2010multi}, \textit{multi-task learning without label correspondences} setting.} Solving the general problem, or even the greedy variant given by \eqref{eq:objective} is expensive, as it requires at each step to solve a non-linear maximization problem. %
Instead, we look for a cheap heuristic to approximate $\lambda^\iter$ which does not require hyperparameter tuning or computing derivatives through the learning process---and which will outperform a \emph{constant} $\lambda^\iter$.%

Note that our setup is in sharp contrast to multi-objective optimization where both the tasks are of interest. We are only interested in minimizing the $\mainloss$ with the help of the $\auxloss$ where possible. 

\section{Gradient Cosine Similarity}
\label{sec:cos-between-task}

\begin{figure}[ht]
\centering
\includegraphics[width=0.6\textwidth]{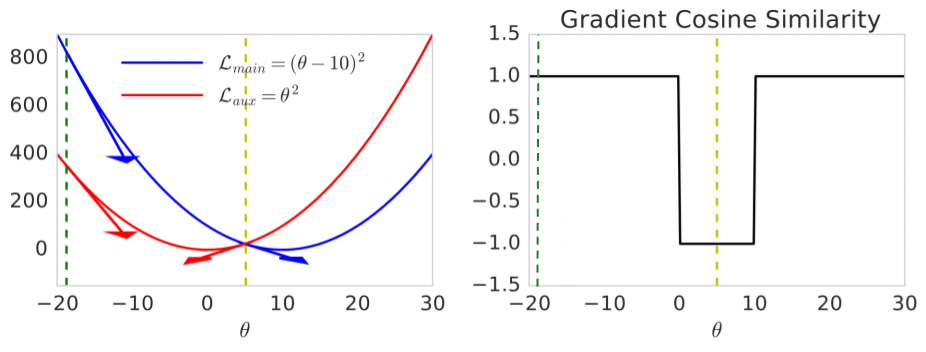}
\caption{Illustration of cosine similarity between gradients on synthetic loss surfaces. }
\label{fig:grad-similarity-toy}
\end{figure}

We propose to use the \emph{cosine similarity of gradients between tasks} to quantify the relatedness between tasks and hence for approximating $\lambda^\iter$. Consider the example in Figure~\ref{fig:grad-similarity-toy} where the main function to minimize is $\mainloss=(\theta-10)^2$ and the auxiliary function is $\auxloss=\theta^2$, their gradients are $\nabla_\theta\mainloss=2(\theta-10)$ and $\nabla_\theta\auxloss=2\theta$ respectively. When %
$\theta=-20$, the gradients of the main and auxiliary functions point in the same direction and the cosine similarity is $1$; minimizing the auxiliary loss is beneficial for minimizing the main loss. However, at a different point, $\theta=5$, the two gradients point in different directions and the cosine similarity is $-1$; minimizing the auxiliary loss would hinder minimizing the main loss. 
This example suggests a natural strategy for approximating $\lambda^\iter$: \emph{minimize the auxiliary loss as long as its gradient has non-negative cosine similarity with the main gradient; otherwise, the auxiliary loss should be ignored}. %
This follows the well-known intuition that if a vector is in the same half-space as the gradient of a function $f$, then it is a \emph{descent direction} for $f$. This reduces our strategy to ask \emph{if the gradient of the auxiliary loss is also a descent direction for the main loss of interest}.

We start by first introducing two propositions in this section and prove that our proposed method ensures convergence on the main task---although one can not guarantee the speed of convergence %
(we discuss this in more details later in this section). The proof holds even when adding to the gradient vectors that might not be the gradient of any function, which we argue is a practical scenario in reinforcement learning. Later in Section \ref{sec:results}, we show empirically that our method is a good heuristic for blocking negative transfer and in practice often leads to positive transfer.

\begin{prop} \label{prop:converge} Given any gradient vector field $G(\paramshared) = \nabla_\paramshared \LL(\paramshared)$ and any vector field $V(\paramshared)$ (such as the gradient of another loss function, or an arbitrary set of updates), an update rule of the form
 \begin{align}
 \paramshared^\iterplusone := \paramshared^\iter - &\alpha^\iter \Bigl( G(\paramshared^\iter) 
 + V(\paramshared^\iter) \max\bigl(0, \cos(G(\paramshared^\iter), V(\paramshared^\iter) )\bigr)\Bigr) 
\end{align}

converges to the local minimum of $\LL$ given small enough $\alpha^\iter$.
\end{prop}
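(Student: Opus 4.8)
The plan is to recognize the given iteration as gradient descent on $\LL$ with a \emph{modified} search direction, and to show that this modified direction is provably a descent direction for $\LL$, after which the convergence claim follows from the standard descent lemma. First I would abbreviate the effective update direction as
\[
U(\paramshared) := G(\paramshared) + V(\paramshared)\,\max\bigl(0,\cos(G(\paramshared),V(\paramshared))\bigr),
\]
so that the rule reads $\paramshared^\iterplusone = \paramshared^\iter - \alpha^\iter U(\paramshared^\iter)$. Since moving along $-U$ changes $\LL$ to first order by $-\alpha^\iter\langle G,U\rangle$, everything hinges on establishing that $\langle G,U\rangle \ge 0$, with equality only at critical points of $\LL$.

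The crux is the following short computation. Using $\langle G,V\rangle = \|G\|\,\|V\|\cos(G,V)$,
\[
\langle G(\paramshared),U(\paramshared)\rangle = \|G\|^2 + \|G\|\,\|V\|\,\cos(G,V)\,\max\bigl(0,\cos(G,V)\bigr).
\]
I would then split on the sign of the cosine: when $\cos(G,V)<0$ the second term vanishes by the clipping, and when $\cos(G,V)\ge 0$ it equals $\|G\|\,\|V\|\cos^2(G,V)\ge 0$. In both cases $\langle G,U\rangle \ge \|G\|^2 \ge 0$, with equality precisely when $G=\vzero$. This is the genuinely new part of the argument: the $\max(0,\cdot)$ clipping discards exactly the component of $V$ that would otherwise cancel progress on $\LL$, so the auxiliary field can never convert the combined step into an ascent direction. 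The degenerate case $G=\vzero$ (where the cosine is undefined) is handled by the convention $\cos(\vzero,\cdot):=0$, which makes $U=\vzero$ and leaves the iterate stationary, consistent with already sitting at a critical point.

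To convert this descent property into convergence, I would assume the usual first-order hypotheses: $\LL$ bounded below and $L$-smooth (Lipschitz gradient). The descent lemma then gives
\[
\LL(\paramshared^\iterplusone) \le \LL(\paramshared^\iter) - \alpha^\iter\langle G,U\rangle + \tfrac{L(\alpha^\iter)^2}{2}\|U\|^2 .
\]
Bounding $\|U\| \le \|G\| + \|V\|$ (since $\max(0,\cos)\le 1$) and using $\langle G,U\rangle \ge \|G\|^2$, I would choose $\alpha^\iter$ small enough that $\tfrac{L\alpha^\iter}{2}(\|G\|+\|V\|)^2 \le \tfrac12\|G\|^2$, which yields the strict decrease $\LL(\paramshared^\iterplusone) \le \LL(\paramshared^\iter) - \tfrac{\alpha^\iter}{2}\|G(\paramshared^\iter)\|^2$. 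Telescoping over $t$ against the lower bound on $\LL$ forces $\sum_t \alpha^\iter\|G(\paramshared^\iter)\|^2 < \infty$, so under a standard step-size schedule $\|G(\paramshared^\iter)\|\to 0$, i.e.\ the iterates approach the critical points of $\LL$.

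The main obstacle is not technical difficulty but precision of the statement. Once the sign argument of the second paragraph is in place, the rest is textbook gradient-descent analysis; the two points I would be careful about are that the monotone-decrease argument only certifies convergence to \emph{critical points} of $\LL$ (not avoidance of saddle points), so I would phrase the conclusion as convergence to critical points to match the abstract, and that the ratio $\|V\|/\|G\|$ governs how small $\alpha^\iter$ must be---which is precisely why the ``small enough $\alpha^\iter$'' qualifier is indispensable when $V$ is an arbitrary field (as in the reinforcement-learning setting) rather than a true gradient.
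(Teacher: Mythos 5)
Your proposal is correct and takes essentially the same approach as the paper: the identical core computation showing $\langle G, U\rangle = \|G\|^2 + \|G\|\,\|V\|\cos(G,V)\max(0,\cos(G,V)) \geq \|G\|^2$, with equality only at critical points of $\LL$, so the clipped update is always a descent direction. The only difference is presentational --- where the paper concludes by citing classical convergence of steepest-descent methods (Cauchy's method, Goldstein 1962), you spell out that step yourself via the descent lemma under explicit smoothness and boundedness assumptions, which is a more self-contained rendering of the same argument rather than a different route.
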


Proof is provided in Appendix~\ref{sec:proof1}. %

We point out three important properties of the above statement. First, %
it guarantees only lack of divergence, but does not guarantee any improvement of convergence. That is, cosine similarity is not a silver bullet that guarantees positive transfer, but it can drop the ``worst-case scenarios'' thus preventing negative transfer. In principle, the convergence speed of the main loss could be affected both positively or negatively, %
{though we notice empirically that the effect tends to be positive in realistic scenarios.} %
Second, it is worth noting that %
simply adding an arbitrary vector field to $\nabla \mathcal{L}$ does not have the convergence property and hence Proposition \ref{prop:converge} has practical utility---there are realistic scenarios where one does not always form a gradient field. For example, the update rule of the Q-learning algorithm in RL does not form a gradient field. Another example is when adding  another  gradient field  to the update rule of REINFORCE (which is a Monte-Carlo estimate of a gradient), depending on the setting, the sum might not be a gradient field anymore.
Lastly, it should be noted that gradient similarity is a local quantity on which we can not technically rely to draw global conclusions on learning dynamics. That is, having negative cosine similarity at step $t$ does not guarantee divergence of the main loss, nor does it guarantee that using the auxiliary loss (with fixed weight) will lead to slower convergence than not having it at all. 

\paragraph{A toy example.}
We illustrate both positive and negative scenarios of our proposed method by running a set of experiments using steepest descent method. There are two main losses to be minimized: $L_1(\theta_1,\theta_2)=\theta_1^2+\theta_2^2$ and $L_2(\theta_1,\theta_2)=(\theta_1<0)(\theta_1^2+\theta_2^2)+(\theta_1>0)(1-\exp(-2(\theta_1^2+\theta_2^2)))$. Two auxiliary losses are: $L_3(\theta_1,\theta_2) = (\theta_1-1)^2 + (\theta_2-1)^2$ and $L_4(\theta_1, \theta_2)=(\theta_1-2)^2 + (\theta_2-0.5)^2$. We also consider an arbitrary vector field $V (\theta_1, \theta_2) = [ -\tfrac{\theta_2}{\theta_1^2+\theta_2^2}-2\theta_1,\tfrac{\theta_1}{\theta_1^2+\theta_2^2} -2\theta_2 ]$. 
Each experiment runs for $600$ iterations with a constant step size of $0.01$. Convergence time is defined as the number of steps needed to get below value $0.1$ of the main loss. Each colored curve shows an example trajectory. 

\begin{wrapfigure}{l}{0.61\textwidth}
\centering
    \includegraphics[width=.6\textwidth]{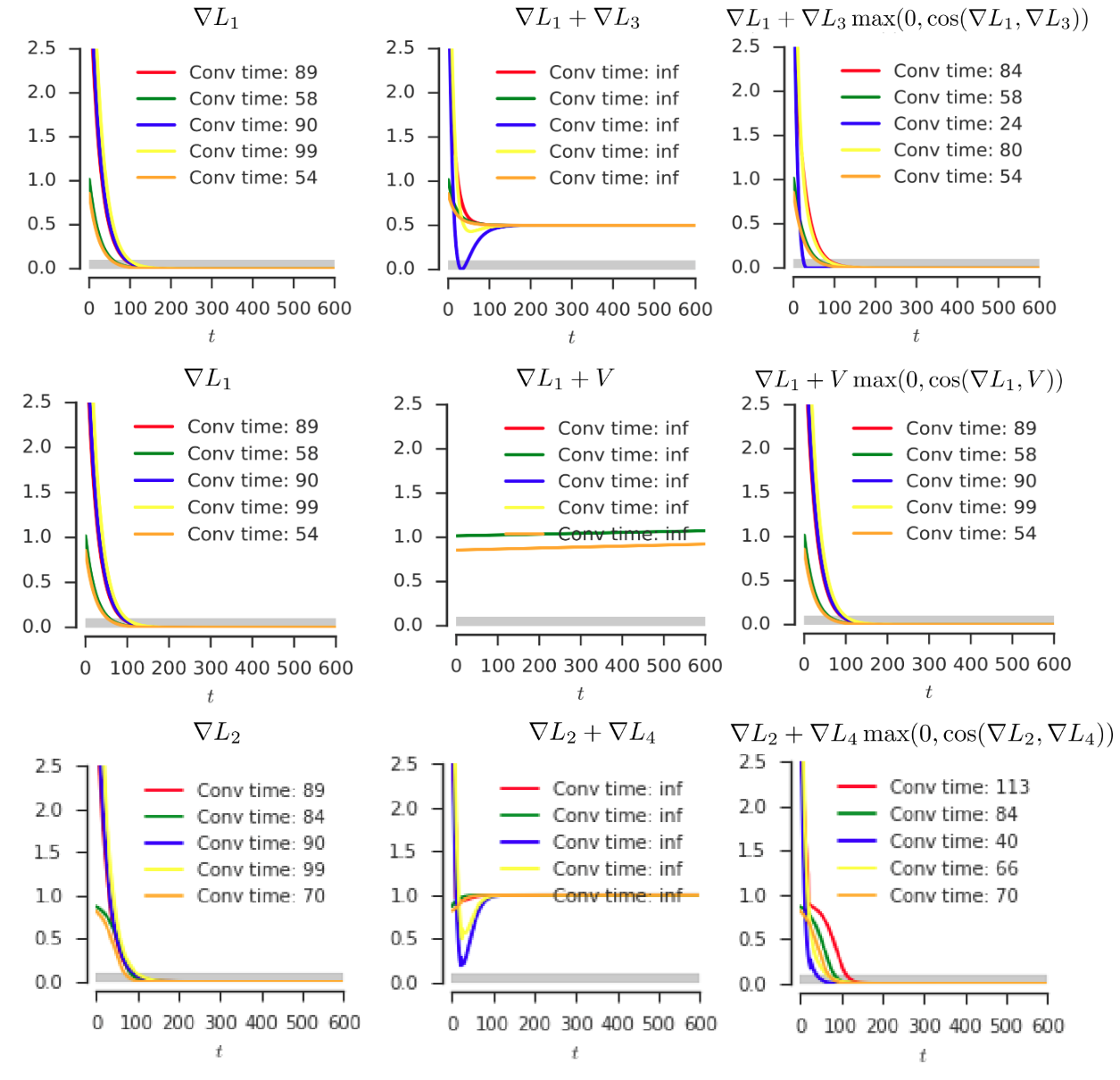}
\caption{
    Positive and negative examples of our proposed method. 
    \emph{Top row:} combine $L_1$ with the gradient of an auxiliary loss $L_3$. 
    \emph{Middle row:} combine $L_1$ with a vector field $V$. 
    \emph{Bottom row:} combine $L_2$ with the gradient of an auxiliary loss $L_4$. 
    Our method (the last column) converges in all cases, while simply adding a gradient or vector field leads to divergence (the second column). 
    }
    \label{fig:resuts_quadratics}
\end{wrapfigure} 

Shown in Figure~\ref{fig:resuts_quadratics}, the first column depicts the loss when following the update given by $\nabla L_1$ and $\nabla L_2$ respectively. The second column shows the performance on the main loss when using an update that sums between the gradient of the main loss and the vector fields produced by $\nabla L_3, V,$ and $ \nabla L_4$ respectively. The last column shows the performance on the main loss when using our proposed cosine similarity method. Figure~\ref{fig:resuts_quadratics} shows our method ensures convergence in all cases. In particular, the top two rows show the positive case where our method speeds up learning, even when combined with $V$. Observe that simply adding $\nabla L_1 + V$ is not a gradient vector field and following it leads to divergence of $L_1$. The last row shows a negative case where our method slows down convergence. Note, however, that our method still converges, while simply adding $\nabla L_2$ and $\nabla L_4$ (the second column) diverges. More details are provided in Appendix~\ref{sec:failuremodes} to help intuitively understand the kind of scenarios for which our approach could help.

Proposition \ref{prop:converge} refers to losses with the same set of parameters $\paramshared$, while \eqref{eq:objective} refers to the scenario when each loss has task specific parameters (e.g. $\paramsone$ and $\paramstwo$). The following proposition extends to this scenario:

\begin{prop} 
Given two losses parametrized with $\paramsall$ (some of which are shared $\paramshared$ and some unique to each loss $\paramsone$ and $\paramstwo$), learning rule:
\begin{align}
    \paramshared^\iterplusone :=  \paramshared^\iter - \alpha^\iter \Bigl( %
    &\nabla_{\paramshared} \mainloss(\paramshared)+
    \\ &  
    \nabla_{\paramshared} \auxloss(\paramshared) 
    \max(0, 
    \cos(
    \nabla_{\paramshared} \mainloss(\paramshared), %
    \nabla_{\paramshared} \auxloss(\paramshared) ))\Bigr) \nonumber
\end{align}
\reducevspace
\begin{align}
    \paramsone^{\iterplusone}  &:= \paramsone^{\iter} - \alpha^{\iter} \nabla_{\paramsone} \mainloss(\paramsall) \\ %
    \paramstwo^\iterplusone &:= \paramstwo^\iter - \alpha^\iter \nabla_{\paramstwo} \auxloss(\paramsall) 
\end{align}
leads to convergence to local minimum of $\mainloss$ w.r.t. $(\paramshared, \paramsone)$ given small enough $\alpha^\iter$.
\end{prop}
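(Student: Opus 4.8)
The plan is to reduce this statement to Proposition~\ref{prop:converge} by stacking the shared and main-specific parameters into a single block and verifying that the resulting joint update is a descent direction for $\mainloss$. Write $\vu = (\paramshared, \paramsone)$ for the concatenation of the parameters on which the main loss actually depends. The key structural fact is that $\mainloss$ is a function of $\vu$ alone and does not involve $\paramstwo$, so the third update line influences the dynamics only by reshaping the (time-varying) auxiliary gradient $\nabla_{\paramshared}\auxloss$ and plays no direct role in the main loss. The first two update lines can then be read as a single step $\vu^\iterplusone = \vu^\iter - \alpha^\iter \tilde d$, and the goal becomes showing that $\tilde d$ obeys the same descent inequality that underlies the proof of Proposition~\ref{prop:converge}.

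First I would write this joint direction explicitly. Let $\tilde G = \nabla_{\vu}\mainloss = (\nabla_{\paramshared}\mainloss, \nabla_{\paramsone}\mainloss)$ be the full gradient of the main loss, and let $\tilde V = (\nabla_{\paramshared}\auxloss, \vzero)$ be the auxiliary gradient lifted to the $\vu$-space by padding with zeros on the $\paramsone$ coordinates. Since the $\paramsone$-update is a plain gradient step on $\mainloss$, the concatenated direction is
\begin{align}
\tilde d = \tilde G + c\,\tilde V, \qquad c = \max\bigl(0, \cos(\nabla_{\paramshared}\mainloss, \nabla_{\paramshared}\auxloss)\bigr). \nonumber
\end{align}

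The main obstacle, and the one subtlety that blocks a verbatim appeal to Proposition~\ref{prop:converge}, is that the gate $c$ is computed from the cosine \emph{in the shared subspace}, $\cos(\nabla_{\paramshared}\mainloss, \nabla_{\paramshared}\auxloss)$, whereas a direct application would require the full-space cosine $\cos(\tilde G, \tilde V)$. These two quantities generally differ, since $\|\tilde G\| \geq \|\nabla_{\paramshared}\mainloss\|$ pulls the denominators apart. The resolution is that they always share the same sign: both have numerator $\langle \nabla_{\paramshared}\mainloss, \nabla_{\paramshared}\auxloss\rangle$ and strictly positive denominators, and because the $\paramsone$-block of $\tilde V$ vanishes we also have $\langle \tilde G, \tilde V\rangle = \langle \nabla_{\paramshared}\mainloss, \nabla_{\paramshared}\auxloss\rangle$. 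Computing the inner product of the gradient with the update direction then gives $\langle \tilde G, \tilde d\rangle = \|\tilde G\|^2 + c\,\langle \nabla_{\paramshared}\mainloss, \nabla_{\paramshared}\auxloss\rangle$. When $\langle \nabla_{\paramshared}\mainloss, \nabla_{\paramshared}\auxloss\rangle < 0$ the shared-space cosine is negative, so $c = 0$ and the cross term drops; when the inner product is nonnegative, $c \geq 0$ and the cross term is nonnegative. In both cases $\langle \tilde G, \tilde d\rangle \geq \|\tilde G\|^2 \geq 0$, which is precisely the descent inequality proved for the update in Proposition~\ref{prop:converge}.

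From here I would close exactly as in Proposition~\ref{prop:converge}. A first-order expansion together with the inequality above yields $\mainloss(\vu^\iterplusone) \leq \mainloss(\vu^\iter) - \alpha^\iter\|\tilde G\|^2 + O\bigl((\alpha^\iter)^2\bigr)$, so for small enough $\alpha^\iter$ the main loss is non-increasing along the trajectory and the $\vu$-iterates converge to a critical point of $\mainloss$ in $(\paramshared,\paramsone)$, which under the standard regularity assumptions is a local minimum. The evolving $\paramstwo$ only changes the particular vector $\nabla_{\paramshared}\auxloss$ entering $\tilde V$ at each step; since the descent inequality was established pointwise for an \emph{arbitrary} added vector, this is exactly the generality already granted by Proposition~\ref{prop:converge}, so the time dependence of $\paramstwo$ is harmless and the earlier convergence argument transfers unchanged.
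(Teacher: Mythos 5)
Your proposal is correct and follows essentially the same route as the paper: reduce to Proposition~1 by treating $G=\nabla_{\paramshared}\mainloss$, $V=\nabla_{\paramshared}\auxloss$ and combine the blockwise contributions via the fact that nonnegative inner products add, which is exactly your computation $\langle\tilde G,\tilde d\rangle=\|\tilde G\|^2+c\,\langle\nabla_{\paramshared}\mainloss,\nabla_{\paramshared}\auxloss\rangle\geq\|\tilde G\|^2$ after zero-padding $\tilde V$ on the $\paramsone$ coordinates. Your explicit treatment of the shared-subspace versus full-space cosine (same numerator and sign, so the gate $c$ behaves identically) and of the time-varying $\paramstwo$ merely spells out details the paper's terser argument leaves implicit.
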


\begin{proof}
Comes directly from the previous proposition that $G = \nabla_{\paramshared} \mainloss$ and $V = \nabla_{\paramshared} \auxloss$. For any vector fields $A,B,C$, we have $\langle A, B \rangle \geq 0$ and $\langle C, B \rangle \geq 0$ implies $\langle A+C, B \rangle \geq 0$.
\end{proof}

Analogous guarantees also hold for the \emph{unweighted} version of this algorithm. Instead of weighting by $\cos(G,V)$, we use a binary weight $(\sign(\cos(G,V)) + 1)/2$ which is equivalent to using $V$ iff $\cos(G,V) > 0$. When training with mini-batches, accurately estimating $\cos(G,V)$ can be difficult due to noise; the unweighted variant only requires $\sign(\cos(G,V))$ which can be estimated more robustly. Hence, we use the unweighted variant in our experiments unless otherwise specified. %

Despite its simplicity, our proposed update rule can give rise to  interesting phenomena. We can show that the emerging vector field could be non-conservative, which means there does not exist a loss function for which it is a gradient. While this might seem problematic (for gradient-descent-based optimizers), %
it describes only the global structure---typically used optimizers are local in nature and they do local, linear or quadratic approximations of the function~\citep{shwartz2017opening}. Consequently, in practice, one should not expect any negative effects from this phenomena, as it simply shows that our proposed technique is in fact qualitatively changing the nature of the update rules for training.

\begin{prop}
In general, the proposed update rule does not have to create a conservative vector field.
\end{prop}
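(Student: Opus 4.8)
The statement only asserts that the update field \emph{need not} be conservative, so it suffices to exhibit a single counterexample: one choice of $\mainloss$ and $\auxloss$ for which the resulting field $U(\paramshared) = G + V\max(0,\cos(G,V))$, with $G = \nabla_{\paramshared}\mainloss$ and $V = \nabla_{\paramshared}\auxloss$, admits no scalar potential. The plan is to use the standard characterization that on a simply connected open subset of $\R^2$ a $C^1$ vector field is the gradient of some scalar function if and only if it is irrotational, i.e. its (two-dimensional) curl $\operatorname{curl} F = \partial_{\theta_1} F_2 - \partial_{\theta_2} F_1$ vanishes identically. The key reduction is that if $U$ had a \emph{global} potential $\Phi$, then its restriction to any open subregion would still equal $\nabla\Phi$ and hence be curl-free there; so it is enough to find one open region on which $\operatorname{curl} U \neq 0$.

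First I would restrict attention to the open set where $\cos(G,V) > 0$, on which the clipping is inactive and $U = G + hV$ with $h = \cos(G,V) = \langle G,V\rangle/(\|G\|\,\|V\|)$, a smooth function wherever $G,V \neq 0$. Both $G$ and $V$ are gradient fields and are therefore individually curl-free. By linearity of the curl and the Leibniz rule, $\operatorname{curl} U = \operatorname{curl}(hV) = h\operatorname{curl}V + (\partial_{\theta_1} h\, V_2 - \partial_{\theta_2} h\, V_1) = \partial_{\theta_1} h\, V_2 - \partial_{\theta_2} h\, V_1$, which is exactly $\nabla h$ crossed with $V$. The conceptual point is that although $G$ and $V$ are each conservative, multiplying $V$ by the position-dependent scalar $h$ breaks this, and $\operatorname{curl} U$ vanishes only where $\nabla h$ happens to be parallel to $V$.

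To make this concrete I would take $\mainloss(\theta) = \theta_1$ (so $G = (1,0)$) and $\auxloss(\theta) = \tfrac12(\theta_1^2 + \theta_2^2)$ (so $V = (\theta_1,\theta_2)$) on the half-plane $\{\theta_1 > 0\}$, where $\cos(G,V) = \theta_1/\sqrt{\theta_1^2+\theta_2^2} > 0$. Here $h$ depends only on the polar angle, so $\nabla h$ points in the angular direction while $V$ points radially; the two are orthogonal, guaranteeing a nonzero cross product. A short differentiation gives $\operatorname{curl} U = \theta_2/\|\theta\|$, which is nonzero for every $\theta_2 \neq 0$. Hence $U$ is not irrotational on this open region and therefore cannot be the gradient of any scalar loss, which establishes the claim.

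The only real work will be the last step: verifying that the chosen example genuinely produces nonzero curl, i.e. that $\nabla h$ is not aligned with $V$ somewhere. Taking $\mainloss$ linear keeps $G$ constant and makes $h$ and its derivatives cheap, so this collapses to an elementary computation whose success is transparent from the radial-versus-angular picture above. A minor technical caveat---that $U$ is only Lipschitz, not $C^1$, across the locus $\cos(G,V) = 0$ where the clipping switches on---is immaterial, since the entire argument lives inside the open region $\{\cos(G,V) > 0\}$ on which $U$ is smooth.
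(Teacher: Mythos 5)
Your proof is correct, but it takes a genuinely different route from the paper's. The paper also argues by counterexample, yet via path integrals rather than curl: it takes $\mainloss(\theta_1,\theta_2)=a\theta_1$ and an auxiliary loss equal to $a\theta_1$ inside the box $[1,2]\times[0,1]$ and $0$ elsewhere, then integrates the merged field along two paths from $(0,0)$ to $(2,2)$ (up-then-right vs.\ right-then-up), obtaining $2a$ and $3a$ respectively; path dependence then rules out any potential, invoking Green's theorem. In that construction the non-conservativity is produced by the on/off switching of the auxiliary gradient (the cosine is $1$ inside the box, and the auxiliary gradient vanishes outside), so it speaks directly to the unweighted variant used in the experiments, at the price of an auxiliary loss that is discontinuous at the box boundary, so its ``gradient field'' is only defined informally. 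Your argument instead lives entirely inside the open region where the clipping is inactive and isolates a sharper mechanism: writing $U=G+hV$ with both $G$ and $V$ gradient fields, you get $\operatorname{curl} U=\partial_{\theta_1}h\,V_2-\partial_{\theta_2}h\,V_1$, so the position-dependent scalar weight $h=\cos(G,V)$ by itself destroys conservativity whenever $\nabla h$ fails to be parallel to $V$. Your concrete instance checks out: with $G=(1,0)$ and $V=(\theta_1,\theta_2)$ on $\{\theta_1>0\}$, $h=\theta_1/\|\theta\|$ is purely angular while $V$ is radial, and a direct computation confirms $\operatorname{curl} U=\theta_2/\|\theta\|\neq 0$ for $\theta_2\neq 0$; your reduction step---a global potential restricts to a potential on any open subregion, forcing the curl to vanish there---is also valid, and you correctly dispose of the regularity caveat at the switching locus by staying inside $\{\cos(G,V)>0\}$. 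What your version buys: a fully smooth counterexample, a general curl formula showing the \emph{weighted} rule (Algorithm 2) is already non-conservative even away from the clipping boundary, and a local differential certificate rather than a global one. What the paper's version buys: an elementary, derivative-free path-integral argument that directly exhibits the clipping itself as a source of path dependence.
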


Proof is provided in Appendix~\ref{sec:proof3}.

\section{Applications}
\label{sec:results}

We now demonstrate how to use the gradient cosine similarity to decide when to leverage the auxiliary. We aim to answer two questions: (i) is gradient cosine similarity capable of detecting negative interference, which can be changing during the course of training between tasks, and (ii) compared with a constant weight, can our proposed heuristic block negative transfer when $\auxtask$ starts to hinder $\maintask$. %
All experiments (unless otherwise stated) follow the \emph{unweighted} version of our method, summarized in Algorithm~\ref{alg:usecosine}. The \emph{weighted} version of our method is summarized in Algorithm~\ref{alg:usecosine2}, Appendix~\ref{sec:pseudocode}. 

\begin{algorithm}[ht]
 \caption{Unweighted version of our method.} 
 \label{alg:usecosine}
 \begin{algorithmic}[1]
 \State Initialize shared parameters $\paramshared$ and task specific parameters $\paramsone, \paramstwo$ 
 randomly.
 \For{$\mathsf{iter}=1:\mathsf{max\_iter}$}
     \State Compute $\nabla_{\paramshared} \mainloss$, $\nabla_{\paramsone} \mainloss$, $\nabla_{\paramshared} \auxloss$, $\nabla_{\paramstwo} \auxloss$.
     \State Update $\paramsone$ and $\paramstwo$ using corresponding gradients
     \If{$\cos(\nabla_{\paramshared} \mainloss, \nabla_{\paramshared} \auxloss) \geq$  0} 
     \State Update $\paramshared$ using $\nabla_{\paramshared} \mainloss + \nabla_{\paramshared} \auxloss$
     \Else
          \State Update $\paramshared$ using $\nabla_{\paramshared} \mainloss$
     \EndIf
 \EndFor
 \end{algorithmic}
\end{algorithm}

\subsection{Binary Classification Tasks}

First, we apply our method in supervised learning and design a multi-task binary classification problem on ImageNet \citep{russakovsky2015imagenet}. We
take a pair of classes from ImageNet, refer to these as class $A$ and class $B$; all the other 998 classes (except $A$ and $B$) are referred to as the $background$. Our tasks $\maintask$ and $\auxtask$ are then formed as a binary classification of \emph{if an image is class $A$ (otherwise $background$)} and \emph{if an image is class $B$ (otherwise $background$)} respectively. 

Our goal is to show that cosine similarity can automatically detect when the auxiliary task becomes unhelpful and block negative transfer thus we consider two scenarios: (i) auxiliary task helps and (ii) auxiliary task hurts. It is natural to hypothesis that a pair of \emph{near} classes would fit scenario (i) and a pair of \emph{far} classes fit scenario (ii). However, there are no ground truth labels for such a class distance measure thus it is difficult to decided if two classes are similar or not. To combat this, we instead use two distance measures to serve as an \emph{estimated} class similarity for selecting class pair $A$ and $B$: \emph{lowest common ancestor (LCA)} is the ImageNet label hierarchy, and \emph{Frechet Inception Distance (FID)} \citep{fid} is the image embedding from a pre-trained model.  
Based on these measures, we picked three pair of classes for \emph{near}, class $871$ (trimaran) vs. $484$ (catamaran), $250$ (Siberian husky) vs. $249$ (malamute), and $238$ (Greater Swiss Mountain dog) vs. $241$ (Entleucher); and for \emph{far}, class $920$ (traffic light) vs. $62$ (rock python), $926$ (hotpot) vs. $800$ (slot), and $48$ (Komodo dragon) vs. $920$ (traffic light). See Appendix~\ref{sec:imagenet:classes:picking} for details on class pair selection.  %

We train modified ResNetV2-18 model \citep{he2016identity} with all parameters in the convolutional layers shared (denote as $\paramshared$) between tasks, followed by task-specific parameters $\paramsone$ and $\paramstwo$.  
First, we use a multi-task learning setup and minimize $\mainloss+\auxloss$ (i.e., $\lambda=1$), and measure cosine similarity on $\paramshared$ through the course of training. Figure~\ref{fig:imagenet-pairs-3} shows that cosine similarity is higher for \emph{near} pairs (blue lines) and lower (mostly negative) for \emph{far} pairs (red lines), indicating that cosine similarity captures task relatedness well. Next, we compare single-task training (i.e., $\lambda=0$), multi-task training, and our proposed method %
on \emph{near} and \emph{far} class pairs to illustrate scenario (i) and (ii) respectfully. %
Note that, the multi-task baseline here is not minimizing the desired objective, $\mainloss$ and is only used for demonstrating what would happen if the auxiliary loss is weighted as a constant without any adaptions based on its usefulness.

\begin{figure*}[ht]
\begin{center}
    \begin{subfigure}[
    Cosine similarity: near (blue) \& far pairs (red). %
    ]{
    \includegraphics[width=0.36\textwidth]{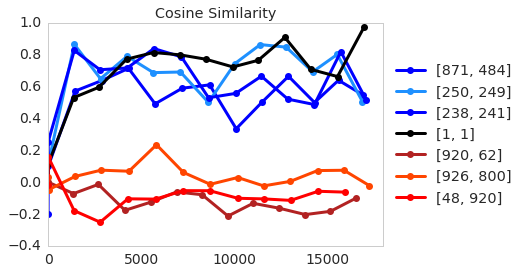}
    \label{fig:imagenet-pairs-3}
    }\end{subfigure}
    \begin{subfigure}[Near pair: $871$ vs $484$]{
    \includegraphics[width=0.29\textwidth]{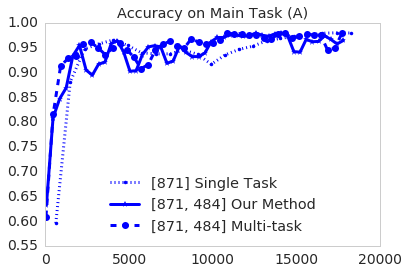}
    \label{fig:nearpair}
    }\end{subfigure}
    \begin{subfigure}[Far pair: $48$ vs $920$]{
    \includegraphics[width=0.29\textwidth]{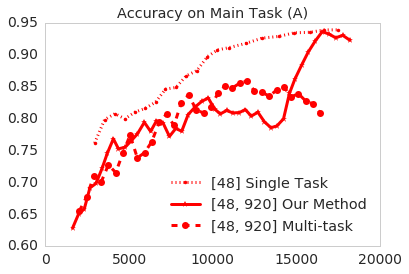}
    \label{fig:farpair}
    }\end{subfigure}
\caption{Experiments on ImageNet class pairs. \emph{(a)}: gradient cosine similarity is higher for near pairs and lower for far pairs. \emph{(b)} and \emph{(c)}: testing accuracy on single-task (dotted), multi-task (dashed), and our method (solid). %
}
 \label{fig:imagenet:subsets:transfer}
\end{center}
\end{figure*}

Figure~\ref{fig:nearpair} shows scenario (i) where auxiliary task helps. While all variants perform similarly in terms of final performance, our method obtains an initial boost compared to the single task learning and thereafter achieves similar performance as the multi-task learning. This behavior indicates that when gradient cosine similarity is high, our method automatically ``switches on'' $\auxloss$ and trains as multi-tasking. Figure~\ref{fig:farpair} shows scenario (ii) where the auxiliary task could hinder. %
Here, the multi-task baseline suffers from poorer performance than single-task learning throughout the training process %
due to the class distance is far and lack of transferability. %
In contrast, our method blocks negative transfer by ``switching off'' the unhelpful auxiliary task and later recovers the performance of single-task. %
We note that the blocking did not happen until later in training (at around step $15,000$). This observation potentially reflects two aspects of our method: (i) estimating the cosine similarity between gradients can be noisy and unreliable, and (ii) our method does not guarantee maximizing transfer but only ``drop the worst.'' Nevertheless, this experiment shows that our method can eventually notice that the auxiliary task hinders the main task and is able to block negative transfer then recover the performance of the single task learning.

\subsection{Multi Class Classification Tasks}

Second, we evaluate our method on the multi-class classification task of rotated MNIST digits, a common benchmark in multi-task continual learning~\citep{lopez2017gradient,farajtabar2019orthogonal} and multi-task generalization~\citep{ghifary2015domain} problems. In this setting, the main task is classifying the original MNIST images to one of the 10 digit classes. The auxiliary task consists of all images rotated by a certain degree. For this task we use a simple multi-layer fully connected network
with 3 hidden layers each with 100 neurons and ReLU  nonlinearity on the top. Each task is then is equipped with a separate 10 neuron head for the associated 10 dimensional logit.  We used all the 60K training inputs and train the network for 50 epochs with batch size set to 128. For the optimizer we used RMSprop with learning rate $0.001$. %

\begin{table}[ht]
\centering
\caption{Results on RotatedMNIST: Main task is the original MNIST problem. In the auxiliary task all the input images are rotated by the  degree specified in the rotation column.}
\footnotesize
\begin{tabular}{c|ccc}
\toprule
            & \multicolumn{3}{c}{ Error $\pm$ Std ($\%$)}        \\ \midrule
rotation    & single-task     & multi-task      & our method      \\ \midrule
0           & $2.14 \pm	0.16$ &	$2.19 \pm 0.19$ & $2.11 \pm 0.13$ \\
45          & $2.15 \pm	0.15$ &	$2.36 \pm 0.17$ & $2.10 \pm 0.14$ \\
90          & $2.17 \pm	0.19$ &	$2.41 \pm 0.22$ & $2.26 \pm 0.08$ \\
135         & $2.11 \pm	0.12$ &	$2.50 \pm 0.19$ & $2.37 \pm 0.12$ \\
180         & $2.20 \pm 0.16$ & $2.23 \pm 0.12$ & $2.21 \pm 0.16$ \\
\bottomrule
\end{tabular}
\label{tab:rotated-mnist}
\end{table}

Table~\ref{tab:rotated-mnist} shows the results. We reported the mean and standard deviation of the classification error on 10k held-out test examples over 10 runs with random network initialization and data order. Overall, the proposed method shows a strong performance compared to single- and multi-task baselines. More specifically, in the case of 0 rotation (i.e. the two tasks are similar) the auxiliary task has positive influence in both cases over the single-task with  some marginal advantage of the proposed method compared to the multi-task baseline. Interestingly, when the digits are rotated (45, 90, etc) the auxiliary task is negatively interfering with the main task. Again, the proposed method succeeds to mitigate the negative behavior significantly. For the case of 45 degrees it not only outperform the multi-task baseline but also beats the single task again by benefiting from positive transfer while cleverly avoiding negative interference. In the case of 90 and 135 degrees the negative effect of the auxiliary task compromised its positive effect and the performance degrades, however, as expected the proposed method is avoiding a significant amount of negative interference compared to the multi-task baseline.

\subsection{Gridworld Tasks}

We now turn to the reinforcement learning domain and consider a typical RL problem where one aims to find a policy $\pi$ that maximizes the sum of future discounted rewards $\mathbb{E}_{\pi}  [ \sum_{\iterrltime=1}^N \gamma^{\iterrltime-1} r_\iterrltime  ]$ in a partially observable Markov Decision Process (POMDP). There have been many techniques proposed to solve this optimization problem (e.g., \citep{reinforce}, \citep{Qlearning}, \citep{PPO}, \citep{impala}). %
Inherently, these techniques are data inefficient due to the complexity of the problem. One way to address this issue is to use transfer learning, such as transfer from pre-trained policies~\citep{policydistillation}. %
However, a teacher policy is not always available for the main task. When in this scenario, one can train policies in other tasks that share enough similarities and hope for a positive transfer. One way of exploiting this extra information is to use %
distillation~\citep{knowledgedistillation, policydistillation} to guide the main task in its initial learning phase~\citep{kickstarting}---although it might be difficult to find a suitable strategy that combines the main and auxiliary losses and/or smoothly transition between them. Typically, the teacher policy can be treated as an auxiliary loss~\citep{kickstarting} or a prior~\citep{distral} with a fixed mixing coefficient. However, these techniques become unsound if the teacher policy is helpful only in specific states while hindering in other states.

We design a simple RL experiment to show that our method is capable of finding the strategy of combining the main loss and the auxiliary loss. We define a distribution over a set of $15\times15$ gridworlds, where an agent observes its surrounding (up to four pixels away) and can move in four directions, up, down, left, and right. %
We randomly place %
two types of positive rewards, $+5$ and $+10$ points, both terminating an episode. %
To guarantee a finite length of episodes, we add a fixed probability of $0.01$ of transitioning to a non-rewarding terminal state. See Appendix~\ref{sec:maze} for more experiment details and a visualization of the environment. 

First, we train a Q-learning agent to navigate the gridworld which gives us a \emph{teacher} policy $\pi^\text{Q}$. Then, we create tasks to which there is a possible positive knowledge transfer by keeping the environment with the same layout but removing the $+10$ rewards (and corresponding states are no longer terminating). Consequently, we have two tasks: the auxiliary task $\auxtask$ where we have a strong \emph{teacher} policy $\pi^\text{Q}$, and the main task $\maintask$ where the $+10$ rewards are removed. %
{That is, we know $\auxtask$ is helpful initially but hurts later since $\maintask$ no longer has the $+10$ reward; our proposed method should adapt to this change during training.} We sample $1,000$ such environment pairs and report expected returns obtained ($100$ evaluation episodes per evaluation point) using five training methods: \textbf{1) reward} using only policy gradient \citep{reinforce} in the new task, this is the baseline;
\textbf{2) distill} using only distillation \citep{policydistillation} cost towards the teacher;
\textbf{3) add} simply adding the two above;
\textbf{4) cos-weighted} using the weighted version of our method (Algorithm~\ref{alg:usecosine2});
\textbf{5) cos-unweighted} using the unweighted version of our method (Algorithm~\ref{alg:usecosine}).

\begin{figure*}[th]
\begin{center}
\centering
\textbf{Cross-task transfer experiment $\auxtask \rightarrow \maintask$}\par\medskip
\includegraphics[width=\textwidth]{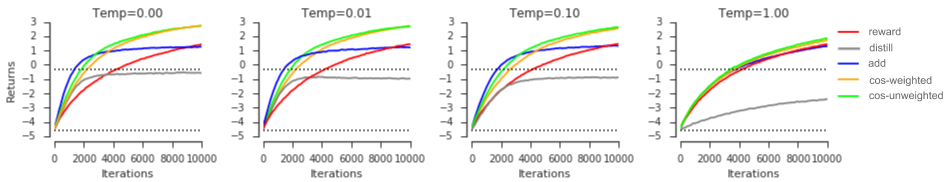} \\
\textbf{Distilling from the solution of the same task $\maintask \rightarrow \maintask$}\par\medskip
\includegraphics[width=\textwidth]{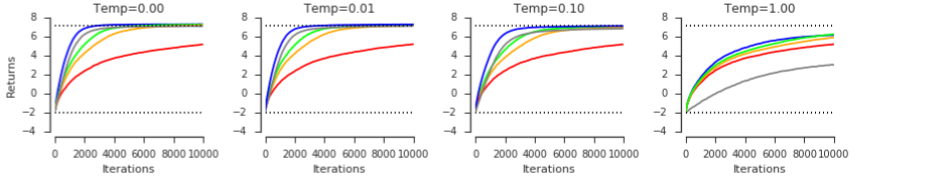}

\caption{ \emph{Top row:} expected learning curves for cross-environment distillation experiments, averaged over $1,000$ partially observable gridworlds. The teacher's policy is based on Q-Learning, its performance in a new environment (with modified positive rewards) is represented by the top dotted line. The bottom dotted line represents random policy. Each column represents a different temperature applied to the teacher policy. $0$ temperature is the original deterministic greedy policy given by Q-Learning. \emph{Bottom row:} expected learning curves for same-environment distillation experiments when the teacher is perfect, where, trusting the teacher everywhere is optimum.
}
\label{fig:distillation}
\end{center}
\end{figure*}

Results are shown in the top row of Figure~\ref{fig:distillation}. %
The baseline \textbf{reward} gives a score of slightly above $1$ point after $10,000$ steps of training. %
To leverage teacher policies, we define the auxiliary loss to be a distillation loss (i.e., a per-state cross-entropy between the teacher's and student's distributions over actions). First, we test using solely the distillation loss while sampling trajectories from the student (\textbf{distill}). This recovers a subset of teacher's behaviors and end up with $0$ point---an expected negative transfer as the teacher is guiding us to states that are no longer rewarding. Then, we test simply adding gradients estimated by policy gradient and distillation (\textbf{add}). The resulting policy learns quickly but saturates at a return of $1$ point, showing very limited positive transfer. Lastly, when using our proposed gradient cosine similarity as the measure of transferability  (\textbf{cos-weighted} and \textbf{cos-unweighted}),
we get a significant performance boost. The learned policies reach baseline performance after just one-third of steps taken by the baseline, and on average obtain $3$ points after $10,000$ steps.\footnote{Note that we compute cosine similarity between a distillation gradient and a single sample of the policy gradient estimator, meaning that we are using a high variance estimate of the similarity. For larger experiments, one would need to compute running means for reliable statistics.} %

This experiment shows that gradient cosine similarity  %
allows using knowledge from other related tasks in an automatic fashion. The agent is simply ignoring the teacher signal when it disagrees with its policy gradient estimator. If they do agree in terms of which actions to reinforce, the teacher's logits are used for better replication of useful policies. In addition, in the bottom row of Figure~\ref{fig:distillation}, we present an experiment of transferring between the same task $\maintask$. Here, the teacher is perfect and it is the optimal to follow it everywhere. We see that the cosine similarity methods underperformed that of simply adding the two losses. This is expected as the noise in the gradients makes it hard to measure if the two tasks are a good fit. %

\subsection{Single and Multi-task Atari Games}

Finally, we consider a similar RL setup in the more complex Atari domain \citep{bellemare2013arcade}. We use a deep RL agent the same as previous works %
\citep{mnih2015human, mnih2016asynchronous, impala, popart} and train using the batched actor-critic with V-trace algorithm \citep{impala}. See Appendix~\ref{sec:atari} for more experiment details.

First, we look at training an agent on a main task (here, to play Breakout) given a \emph{sub-optimal} teacher (obtained by stop training before it converges) to the task. Analogous to the previous experiment, we define the auxiliary loss as the distillation (i.e., the KL divergence) between the sub-optimal teacher and the training model. Figure \ref{fig:atari-0}) shows as expected that solely relying on distillation loss (\textbf{Only KL}) leads to lower performance. Training with both distillation and RL losses without adaptation (\textbf{RL+KL(Baseline)}) leads to slightly better but also sub-optimal performance. While both approaches learn very quickly initially, they plateau much lower than the pure RL approach (\textbf{RL(Baseline)}, minimizing just the main task) due to the potential negative effect from the imperfect teacher and the lack of ability to detect and prevent such an effect. In our method (\textbf{RL+KL(Our Method)}), the KL penalty is scaled at every time-step by the cosine similarity between the policy gradient and distillation losses; once this falls below a fixed \emph{threshold}, the loss is ``turned off'' thus preventing negative transfer. This experiment shows our approach is able to learn quickly at the beginning then continue fine-tuning with pure RL loss once the distillation loss is zeroed out. That is, %
when the teacher does not provide useful information anymore, our method simply encourages the agent to learn on its own. %

\begin{wrapfigure}{r}{0.4\textwidth}
\begin{center}
\includegraphics[width=0.35\textwidth]{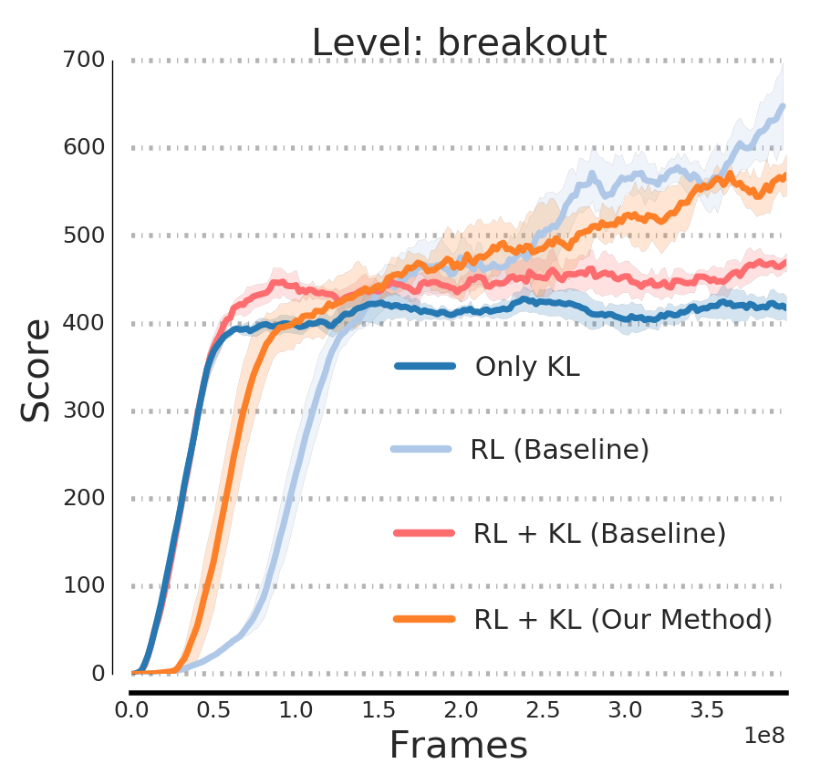}
\caption{Results on Breakout. We perform distillation of a sub-optimal teacher policy as an auxiliary task.}
\label{fig:atari-0}
\end{center}
\end{wrapfigure}

Then, we consider a setting where the main task $\maintask$ is to train an agent to play two Atari games, Breakout and Ms.~PacMan, such that the performance on both games are jointly maximized (i.e., $\mainloss = \LL_{Breakout} + \LL_{PacMan}$). %
Similar to previous experiments, we have access to a teacher trained on just Breakout as the auxiliary task. Note that $\maintask$ itself is chosen to be
multi-task to illustrate a complex scenario where $\auxtask$ helps with only part of $\maintask$ (here, Breakout).

Figure \ref{fig:atari-1} shows that, compared to the baseline method which trains only on the main loss (\textbf{Multitask}) and the simple addition of the updates of the two tasks method (\textbf{Multitask RL + Distillation}) where the agent learns one task at the expense of the other, our method (\textbf{Multitask RL + Distillation (Our Method)}) of scaling the auxiliary loss by gradient cosine similarity is able to correct itself by ``turning off'' the auxiliary distillation when the teacher is no longer helpful %
and is able to learn Ms. PacMan without forgetting Breakout. The evolution of the gradient cosine similarity between the auxiliary teacher and the main Breakout and Ms.PacMan task in Figure \ref{fig:atari-1} provides a meaningful cue for the usefulness of $\auxloss$.

\begin{figure*}[ht]
\begin{center}
\includegraphics[width=\textwidth]{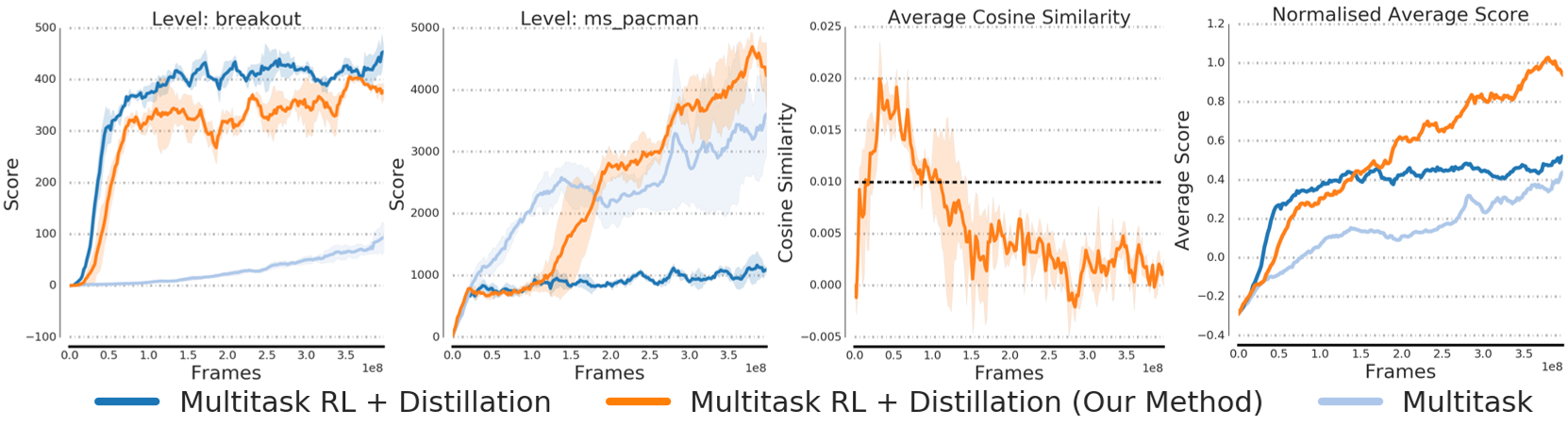}
\end{center}
\caption{Results on Breakout and Ms. PacMan (averaged over $3$ seeds). The two plots to the left show performance on Breakout and Ms. PacMan respectively. The third plot shows how the gradient cosine similarity between the teacher and the two tasks changes during training. The last plot shows an average score of the multi-task performance (normalized independently for each game based on the best score achieved across all experiments). Our method learns both games without forgetting and achieves the best average performance.}
\label{fig:atari-1}
\end{figure*} 

\section{Related Work}

Our work is related to the literature on identifying task similarity in transfer learning. It is generally believed that positive transfer can be achieved when source task(s) and target task(s) are related. However, it is usually assumed that this relatedness mapping is provided by human experts \citep{taylor2009transfer, pan2010survey}; few works have addressed the problem of finding a general measure of similarity to predict transferability between tasks. In RL, methods have been proposed to use the Markov Decision Process (MDP) similarity as a measure of task relatedness \citep{carroll2005task, ammar2014automated}. In image classification, \citet{yosinski2014transferable} defined image similarity in ImageNet by manually splitting classes into man-made versus natural objects. However, none of these works have explicitly used the learned similarity metric to quantify the transferability from one task to another. In our work, we propose to use cosine similarity of gradients as a generalizable measure across domains and show it can be directly leveraged to detect when unwanted interference occurs and block negative transfer. One important aspect of task similarity for transfer is that it is highly dependent on the parametrization of the model and the current value of the parameters. We exploit this property by providing a heuristic similarity measure for the current parameters, resulting in an approach that relies on an adaptive weight over the updates of the model.

Auxiliary tasks have shown to be beneficial in facilitating learning across domains. In RL, several work have studied using different tasks as auxiliaries. The UNREAL framework \citep{jaderberg2017unreal} incorporated unsupervised control along with reward prediction learning as auxiliary tasks; \citet{mirowski2017learning} studied auxiliary tasks in the context of navigation; \citet{kartal2019terminal} used terminal prediction as an auxiliary task; \citet{hernandez2019agent} considered multiagent system where the other agents' policy can be used as auxiliary tasks for the main agent. In image classification, \citet{zhang2016augmenting} used unsupervised reconstruction tasks. %
\citet{papoudakis18} also explored auxiliary losses for VizDoom. However, these works rely on empirical results and do not address how the auxiliary tasks were selected. %
In our work, we propose a simple yet effective heuristic that provides explicit guidance on the important question of how to select a good auxiliary task. %

Our work is also related to, but different from multi-task learning \citep{caruana1997multitask}, particularly the line of work on using adaptive scaling techniques for multi-objective learning. %
\emph{GradNorm} \citep{gradnorm} uses gradient magnitude to scale loss function for each task, aiming to learn well for all tasks. \citet{kendall2017multi} proposed a weighting mechanism by considering the homoscedastic uncertainty of each task. \citet{sener2018multi} propose formulating multi-task learning as multi-objective optimization and leverage multiple gradient descent algorithm (MGDA) \citep{mgda} to find a common descent direction among multiple tasks. Our work distinct in two ways: (i) we care only about the performance of the main task but not all tasks; our optimization goal is %
not the traditional multi-objective optimization, and (ii) %
the aforementioned work scale the losses individually without looking at their interaction (which can lead to poor performance in our problem setup when the auxiliary task hurts the main task), 
whereas we look for alignments in the vector field between the main and the auxiliary task, and the auxiliary task is used only when it is well-aligned with the main task. 
Nevertheless, our approaches are complementary since they are solving different problems and it would be interesting to combine them to further improve the performance.

\section{Shortcomings}
\label{sec:discussion}

We discuss here a few shortcomings of our proposed method and present some initial analyses. First, estimating the cosine similarity between the gradients of tasks could be expensive or noisy and we currently use a fixed threshold for turning off the auxiliary loss. These could be addressed by calculating a running average of the cosine similarity to get a smoother result and potentially hyper-tune the threshold instead of setting it as a fixed constant. %
{We point out also that even for a perfect alignment---a cosine similarity of $1$---is not informative. In this case, using the auxiliary loss would be equivalent to increasing the learning rate. The positive transfer comes mostly from the fact that the gradients are \emph{not} perfectly aligned, meaning that the auxiliary loss is learning features that could become useful to the main loss later on. This is reflective on the fact that the gradient descent is not necessarily the optimal descent direction.} 
In addition, one might argue that our approach would fail in high-dimensional spaces since random vectors in such spaces tend to be orthogonal, so that cosine similarity will be naturally driven to $0$. In fact, this is not the case; if two gradients are meant to be co-linear, the noise components cancel each other thus will not affect the cosine similarity estimation. We empirically explore this in Appendix~\ref{sec:cosine-high-dimensions} in the supplementary material.

Second, the new loss surface might be less smooth which can be problematic when using optimizers that rely on statistics of the gradients or second order information (e.g. Adam or RMSprop). In these cases, the transition from just the gradient of the main task to the sum of the gradients can affect the statistics of the optimizer in unexpected ways. While this can be technically true, we have not observed this behaviour in practice. 

Thirdly, although the proposed approach works well empirically on complex and noisy tasks like Atari games, as discussed in Section~\ref{sec:cos-between-task}, it guarantees only  the main task's convergence, but not how fast it is. While removing the worst case scenarios is important and a good first step, one might care more for data efficiency when using auxiliary losses (i.e., faster convergence). In Appendix~\ref{sec:failuremodes} we provide counter examples where the proposed update rule slows down learning, compared to optimizing the main task alone. %

Finally, similar to other concurrent works \citep{yu2020gradient,schaul2019ray} we are making the assumption that the auxiliary loss harming learning of main loss locally (at time $t$) implies it will affect the convergence speed globally. In theory, however, this is not necessarily the case. By moving in an ascent direction on the main loss, one might learn better representations that could lead to better performance or faster learning. Empirically we do not notice this to happen and we hypothesize that overparametrization helps making gradient similarity reliable as a signal for task interference. We provide a longer discussion in the Appendix~\ref{sec:overparam}.

\section{Conclusions}

In this work, we explored a simple yet efficient technique to ensure that an auxiliary loss does not hurt the learning on the main task. The proposed approach reduces to applying gradients of the auxiliary task only if they are a descent direction of the main task. 
We have empirically shown the potential of using the proposed hypothesis as an elegant way of picking a suitable auxiliary task. %
While we have mostly considered scenarios where the auxiliary task helps initially but hurts later, it would be interesting to explore settings where the auxiliary task hurts initially but helps in the end. Examples of such are annealing $\beta$ in $\beta$-VAE \citep{betavae} and annealing the confidence penalty in \citet{pereyra2017regularizing}.

\clearpage
\newpage

\bibliography{main}
\bibliographystyle{abbrvnat}

\clearpage
\newpage

\onecolumn

\appendix
{\begin{center} {\Large{\textbf{Adapting Auxiliary Losses Using Gradient Similarity: \\ Supplementary Material}}}
\end{center}}

\section{Proofs}

\subsection{Proof for Proposition 1}
\label{sec:proof1}

\noindent \textit{ Given any gradient vector field $G(\paramshared) = \nabla_\paramshared \LL(\paramshared)$ and any vector field $V(\paramshared)$ (such as gradient of another loss function, but could be arbitrary set of updates), an update rule of the form}
\begin{align}
 \paramshared^\iterplusone := \paramshared^\iter - \alpha^\iter ( G(\paramshared^\iter) + V(\paramshared^\iter) \max(0, \cos(G(\paramshared^\iter), V(\paramshared^\iter) ))
\end{align}
\textit{converges to the local minimum of $\LL$ given small enough $\alpha^\iter$.}

\begin{proof}
Let us denote
$$
G^\iter := G(\paramshared^\iter)\;\;\;\;\;\;\;\; V^\iter := V(\paramshared^\iter)\;\;\;\;\;\;\;\; \nabla \LL^\iter := \nabla_\paramshared \LL(\paramshared^\iter)
$$
$$
\Delta\paramshared^\iter :=  G^\iter + V^\iter \max(0, \cos(G^\iter, V^\iter )).
$$
Our update rule is simply
$\paramshared^\iterplusone := \paramshared^\iter - \alpha^\iter \Delta\paramshared^\iter$
and we have
\begin{align}
\langle \Delta\paramshared^\iter, \nabla \LL^\iter \rangle &= 
\langle G^\iter + V^\iter \max(0, \cos(G^\iter, V^\iter )),  \nabla \LL^\iter  \rangle \\
&= \langle G^\iter, \nabla \LL^\iter \rangle + \langle V^\iter \max(0, \cos(G^\iter, V^\iter) ),  \nabla \LL^\iter  \rangle \\
&= \| \nabla \LL^\iter \|^2 + \tfrac{1}{\| V^\iter \| \| \nabla \LL^\iter \|} \max(0, \langle \nabla \LL^\iter , V^\iter \rangle) \langle V^\iter ,  \nabla \LL^\iter  \rangle \geq 0.
\end{align}
And it can be 0 if and only if $\| \nabla \mathcal{L}^\iter \| = 0$ (since sum of two non-negative terms is zero iff both are zero, and step from (4) to (5) is only possible if this is not true), thus it is 0 only when we are at the critical point of $\mathcal{L}$.
Thus the method converges due to convergence of steepest descent methods, see 
``Cauchy's method of minimization'' \citep{goldstein1962cauchy}. 
\end{proof}

\subsection{Proof for Proposition 3}
\label{sec:proof3}

\noindent \textit{
In general, the proposed update rule  does not have to create a conservative vector field.
}
\begin{proof}
Proof comes from a counterexample, let us define in 2D space:
$$
\mathcal{L}_{\mainlabel}(\theta_1, \theta_2) = a \theta_1
$$
$$
\mathcal{L}_{\auxlabel}(\theta_1, \theta_2) = \left \{  \begin{matrix}
a \theta_1& \;\; \text{if } \theta_1 \in [1, 2] \wedge \theta_2 \in [0, 1] \\
0 &\text{ therwise }
\end{matrix} \right .
$$
for some fixed $a \neq 0$.
Let us now define two paths (parametrized by $s$) between points $(0,0)$ and $(2,2)$, path $A$ which is a concatenation of a line from $(0,0)$ to $(0,2)$ (we call it $U$, since it goes up) and line from $(0,2)$ to $(2,2)$ (which we call $R$ as it goes right), and path $B$ which first goes right and then up. Let $V_\text{cos}$ denote the update rule we follow, then:
$$
\int_A V_\text{cos} ds = \int_A \nabla \mathcal{L}_\text{\mainlabel} ds = \int_U \nabla  \mathcal{L}_\text{\mainlabel} ds + \int_R \nabla  \mathcal{L}_\text{\mainlabel} ds =  \int_R \nabla  \mathcal{L}_\text{\mainlabel} ds
= 2a
$$
At the same time, since gradient of $\mathcal{L}_{\mainlabel}$ is conservative by definition:
$$
\int_B V_\text{cos} ds = \int_B \nabla \mathcal{L}_\text{\mainlabel} ds + \int_C \nabla \mathcal{L}_\text{\auxlabel} ds
=
\int_A \nabla \mathcal{L}_\text{\mainlabel} ds + \int_C \nabla \mathcal{L}_\text{\auxlabel} ds 
=
2a + \int_C \nabla \mathcal{L}_\text{\auxlabel} ds =  3a
$$
where $C$ is a part of $B$ that goes through $[1, 2] \times [0, 1]$. We conclude that $\int_A V_\text{cos} ds \neq \int_B V_\text{cos} ds$, so our vector field is not path invariant, thus by  Green's Theorem it is not conservative, which concludes the proof. See Figure~\ref{fig:pathintegrals} for visualization.
\end{proof}

\begin{figure}
    \centering
    \includegraphics[width=0.7\textwidth]{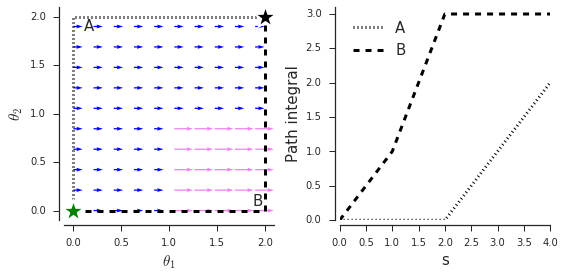}
    \caption{Visualization of the counterexample from Proposition 3, stars denote starting (green) and end (black) points. Dotted and dashed lines correspond to paths A and B respectively. Blue arrows represent gradient vector field of the main loss, while the violet ones the merged vector field.}
    \label{fig:pathintegrals}
\end{figure}

\begin{figure*}%
    \centering
    \includegraphics[width=\sfigwidth]{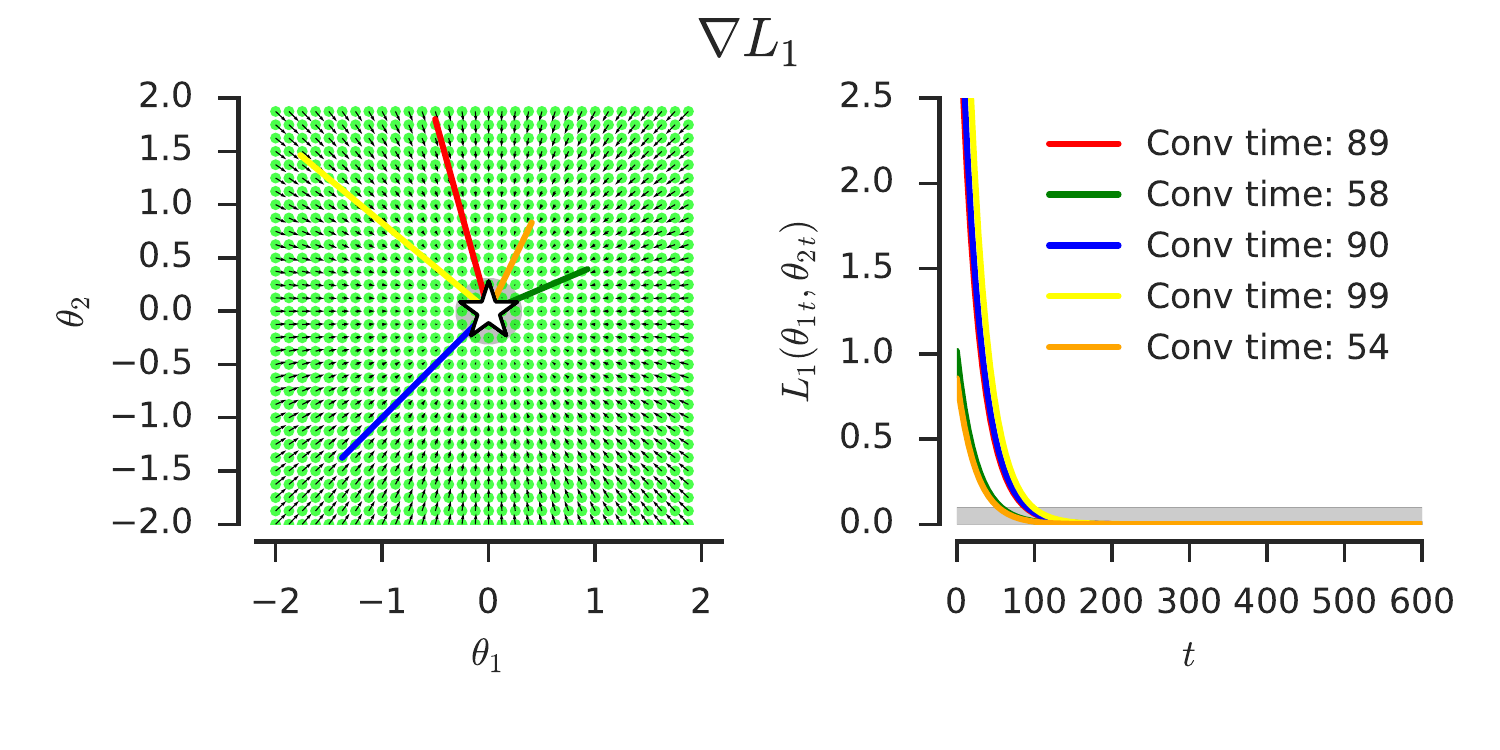}
    \includegraphics[width=\sfigwidth]{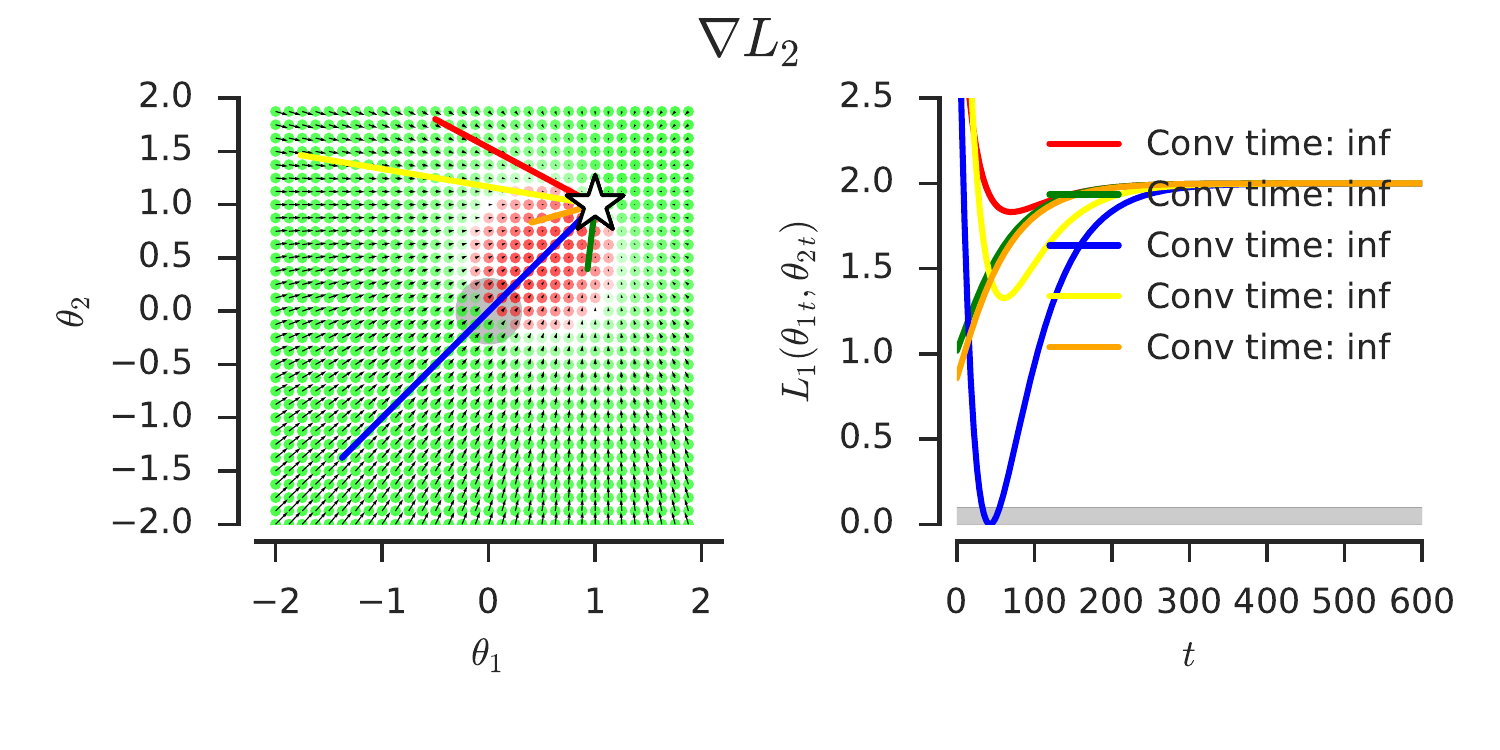}
    \includegraphics[width=\sfigwidth]{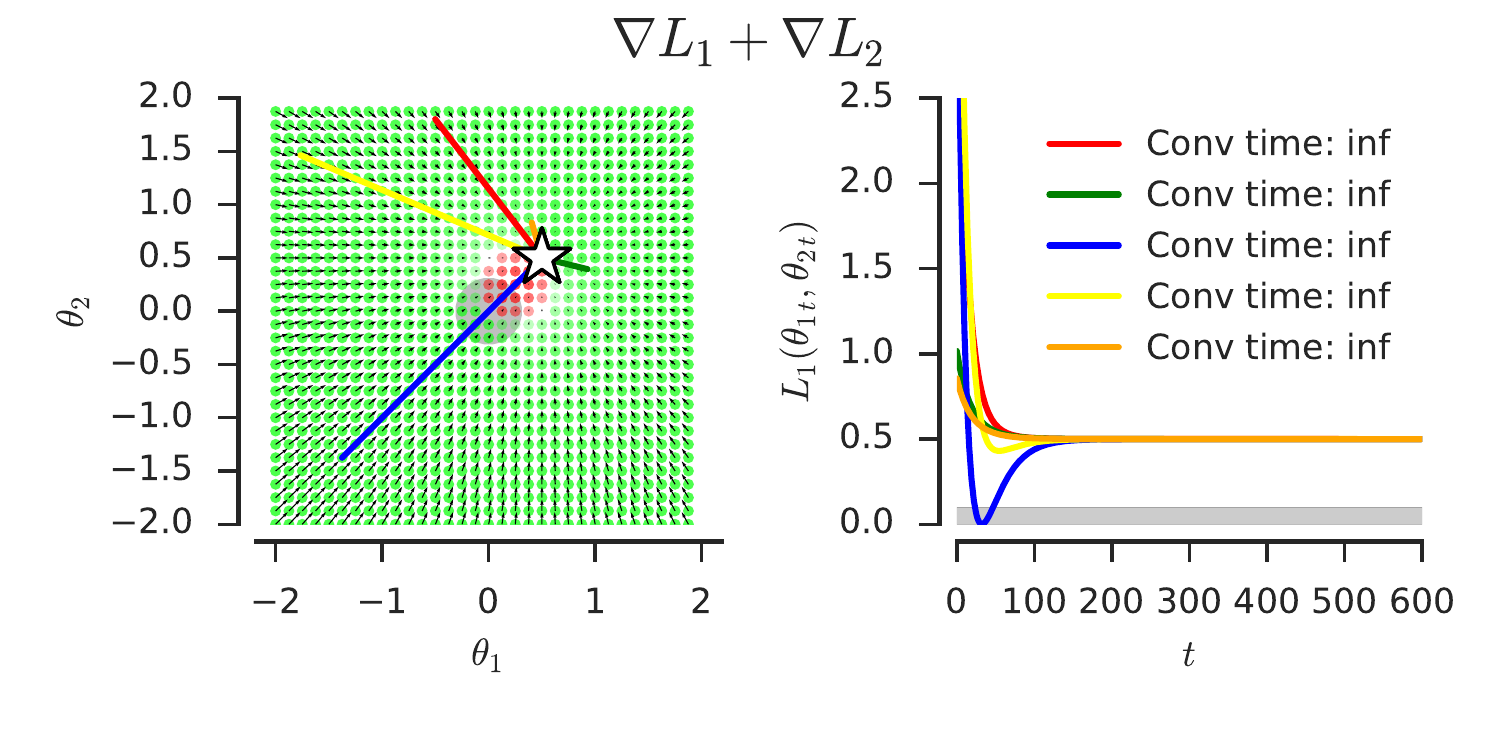}
    \includegraphics[width=\sfigwidth]{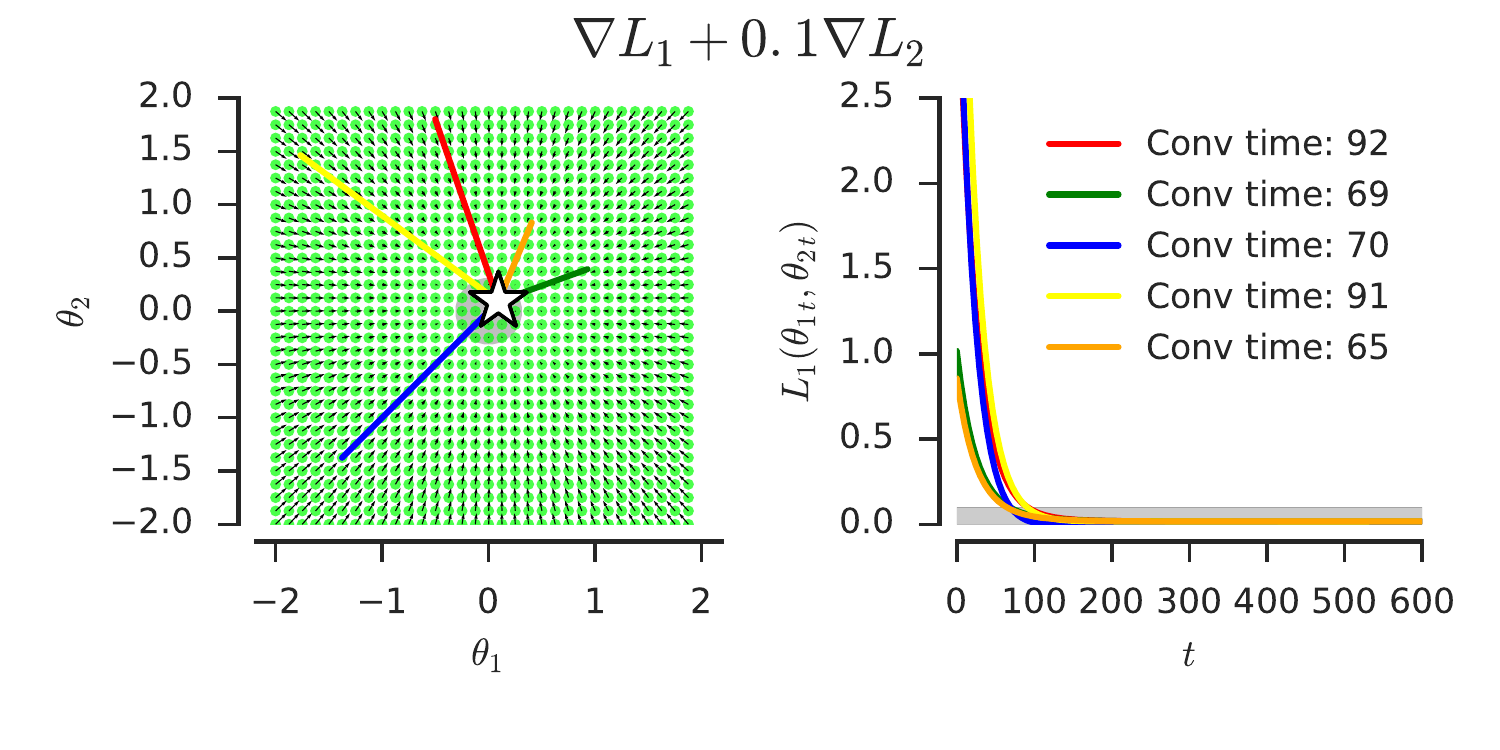}
    \includegraphics[width=\sfigwidth]{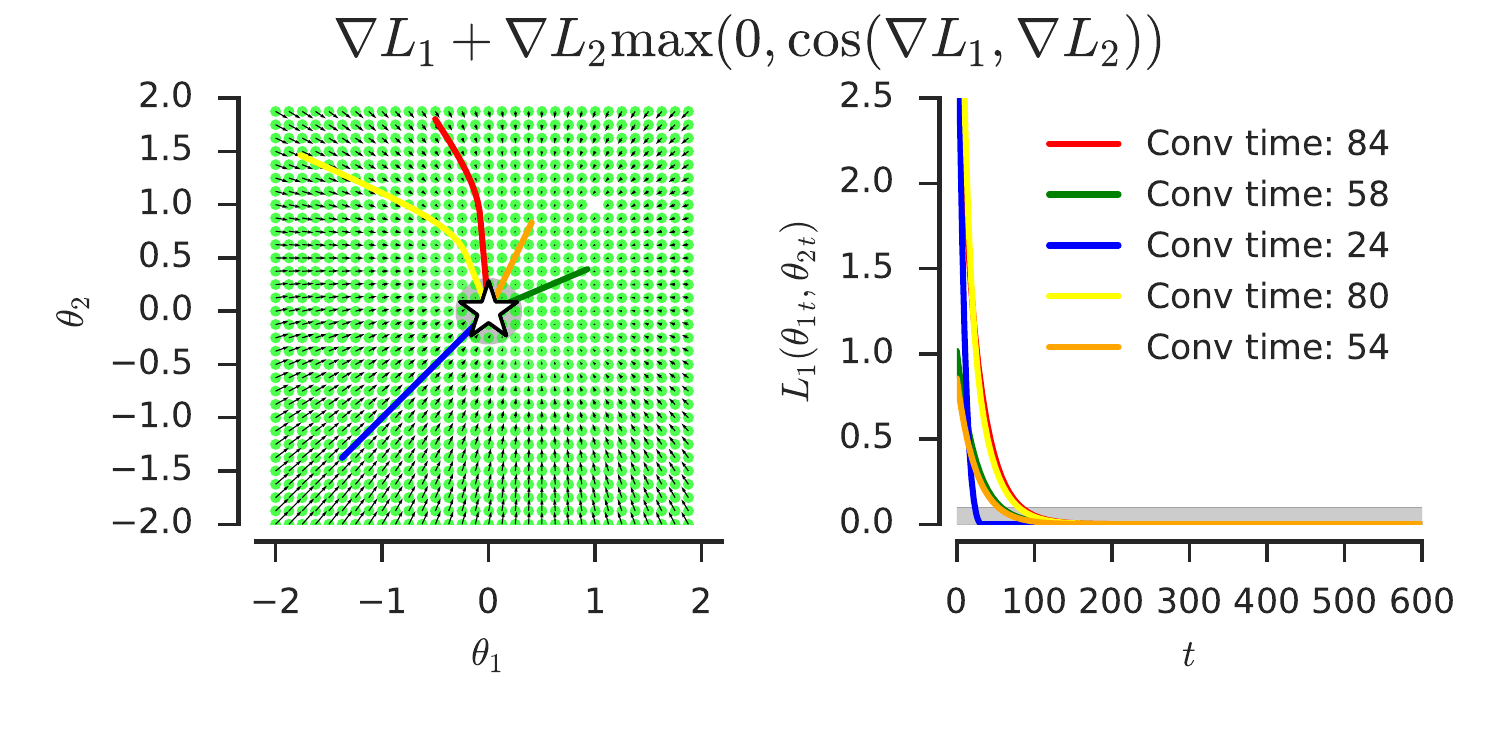}
    \includegraphics[width=\sfigwidth]{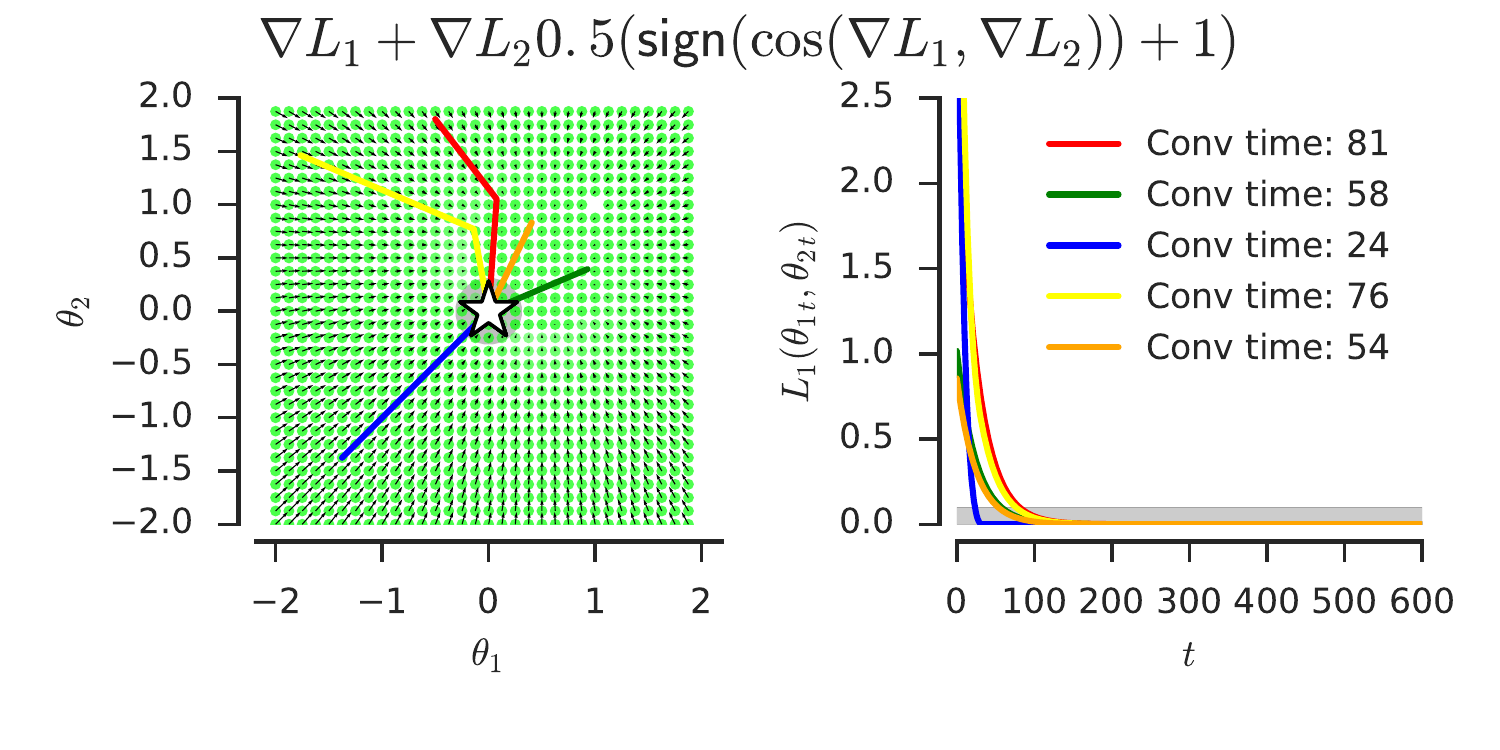}
    \includegraphics[width=\sfigwidth]{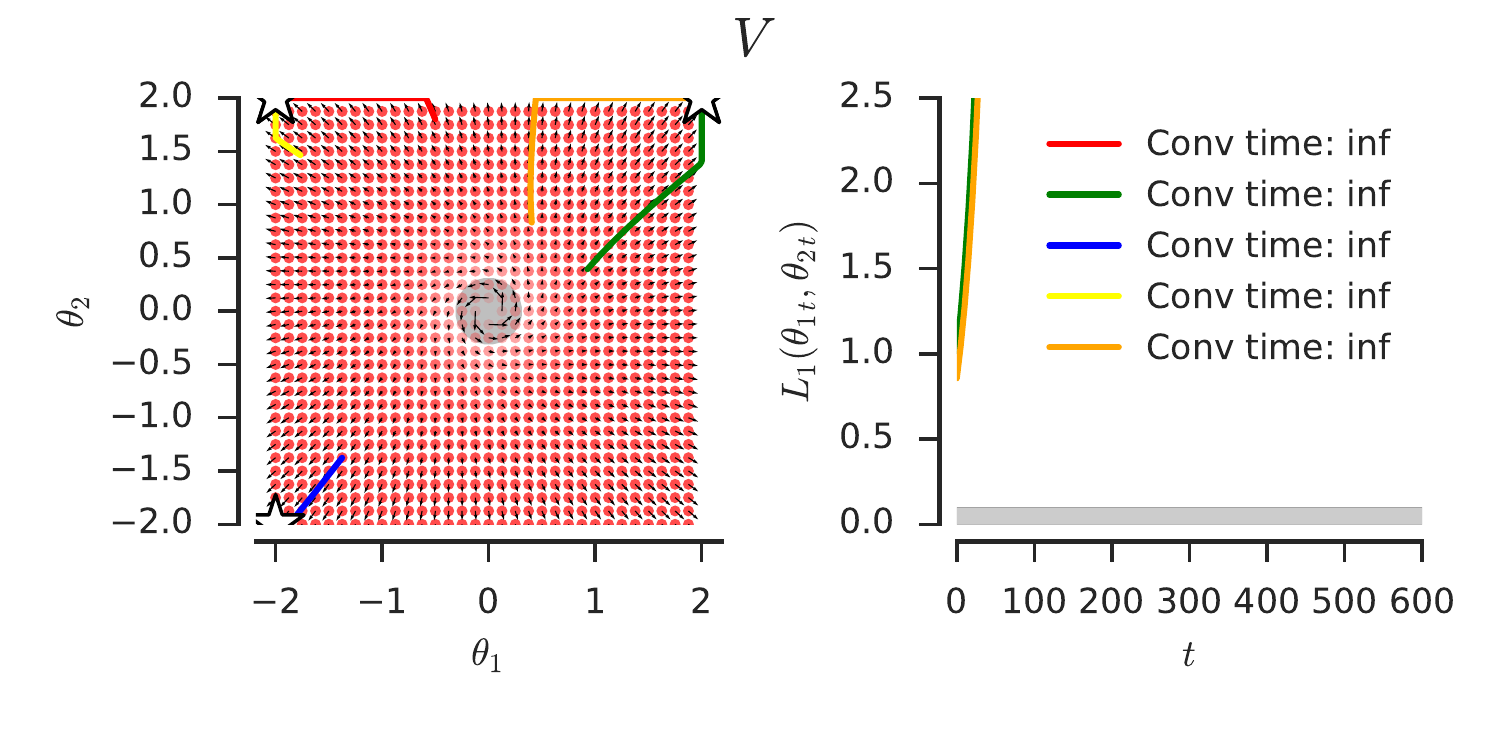}
    \includegraphics[width=\sfigwidth]{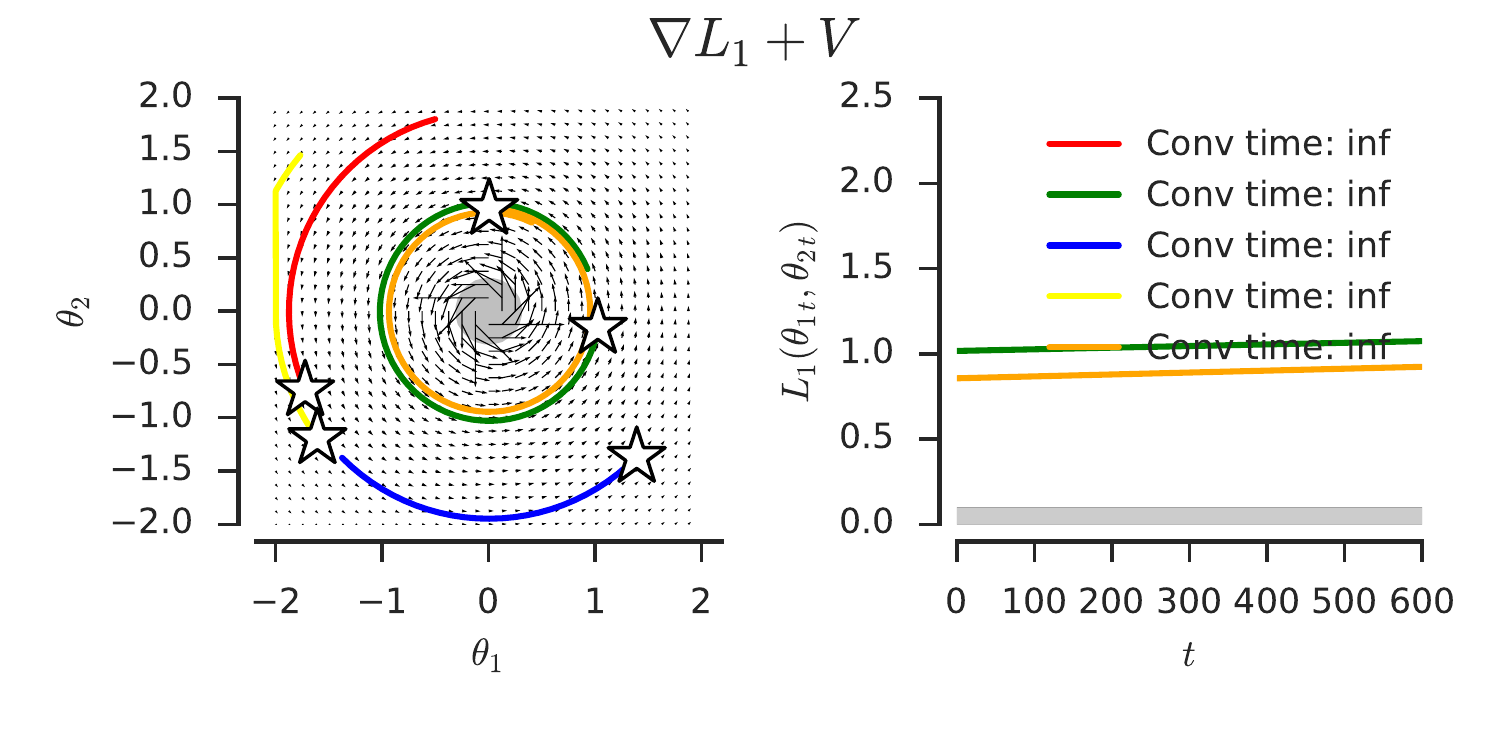}
    \includegraphics[width=\sfigwidth]{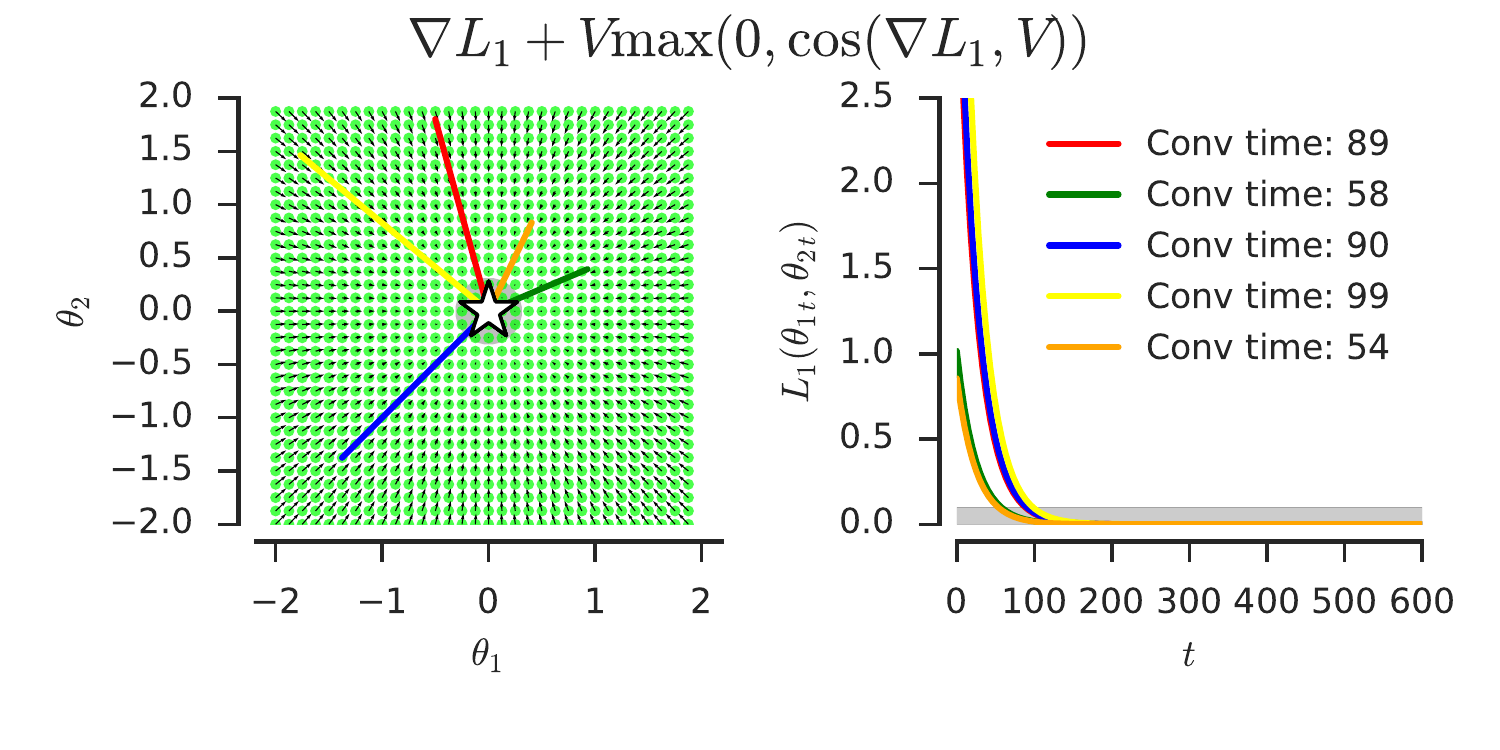}
    \includegraphics[width=\sfigwidth]{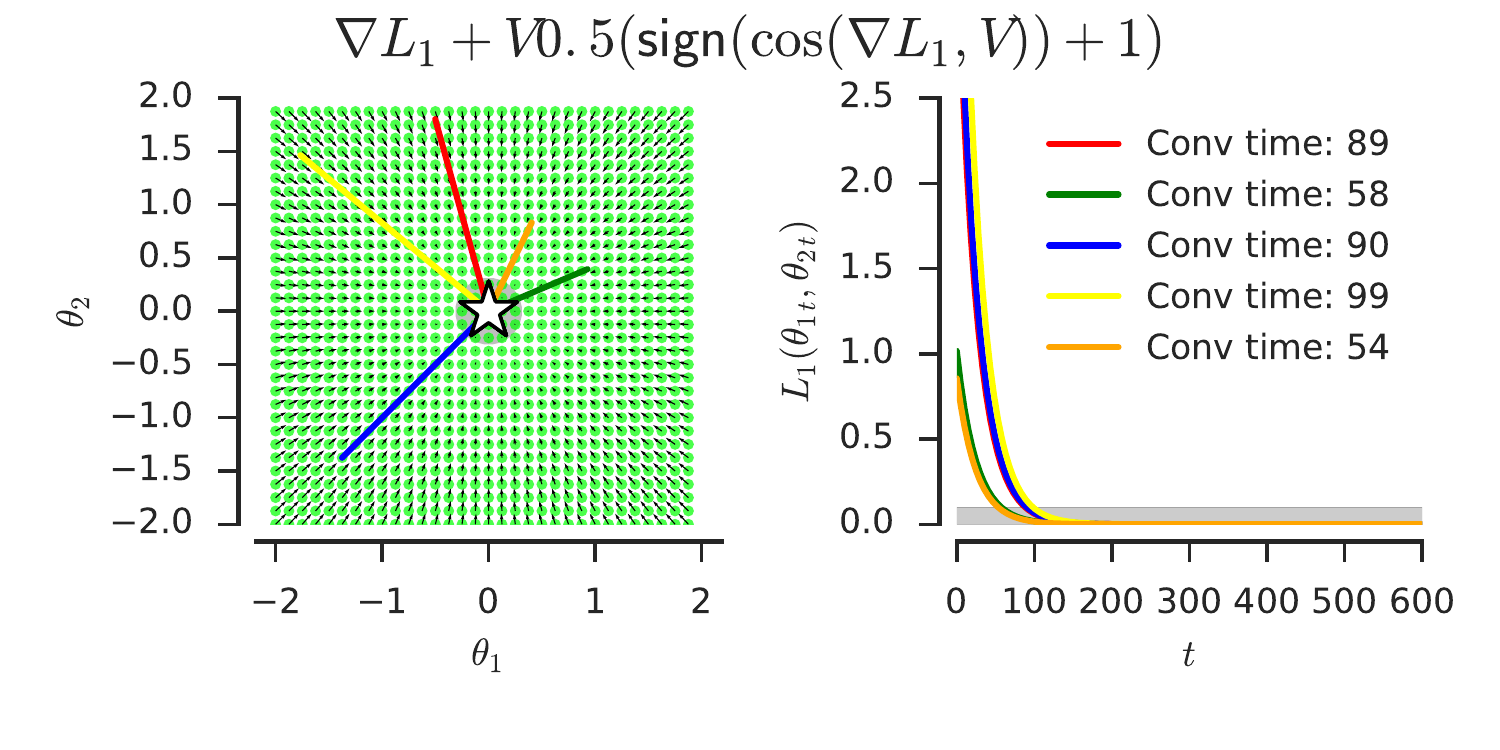}
    \caption{
    Positive example optimization for $L_1(\theta_1,\theta_2)=\theta_1^2+\theta_2^2$, $L_2(\theta_1,\theta_2) = (\theta_1-1)^2 + (\theta_2-1)^2$ and $V (\theta_1, \theta_2) = [ -\tfrac{\theta_2}{\theta_1^2+\theta_2^2}-2\theta_1,\tfrac{\theta_1}{\theta_1^2+\theta_2^2} -2\theta_2 ]$ where the proposed method speeds up the process (compared on all runs). Each colored trajectory represents one optimization run with random initial position. Star represents the convergence point. All experiments use steepest descent method and run 600 iterations with a constant step size of 0.01. Convergence time is defined as number of steps needed to get below 0.1 loss of $L_1$ (gray region). Color of each point represents its alignment with $\nabla L_1$ (green---positive alignment, red---negative alignment, white---directions are perpendicular). In this example $L_2$ is helpful for $L_1$ as it reinforces good descent directions in most of the space. However, simple mixing is actually slowing optimization down (or makes it fail completely, see the second row), while the proposed methods (weighted and unweighted variants) converge faster (see the third row). When using non-conservative vector field $V$ one obtains lack of convergence (cyclic behaviour, see the fourth row), while the proposed merging still works well (see the last row). 
    }
    \label{fig:aux-positive}
\end{figure*}

\section{Positive and Negative Examples of Auxiliary Losses}\label{sec:failuremodes}

We show in figure \ref{fig:aux-positive} positive examples of where an auxiliary loss can help a main loss to converge faster. In addition, as mentioned in Section \ref{sec:cos-between-task} that an auxiliary task does not guarantee faster convergence, we discuss here a few potential issues of using cosine similarity of gradients to measure task similarity and show in Figure~\ref{fig:aux-negative} a negative example on where the auxiliary loss could slow-down the convergence of the main task. First, the method depends on being able to compute cosine between gradients. However, in deep learning we rarely are able to compute exact gradients in practice, we instead depend on their high variance estimators (mini-batches in supervised learning, or Monte Carlo estimators in RL). Consequently, estimating the cosine similarity might require additional tricks such as keeping moving averages of estimates. Second, adding additional task gradient in selected subset of iterates can lead to very bumpy surface from the perspective of optimizer, causing methods which keep track of gradient statistics/estimate higher order derivatives, can be less efficient. Finally, one can construct specific functions, where despite still minimizing the loss, one significantly slows down optimization process.
\begin{figure}[ht]
    \centering
    \includegraphics[width=0.48\textwidth]{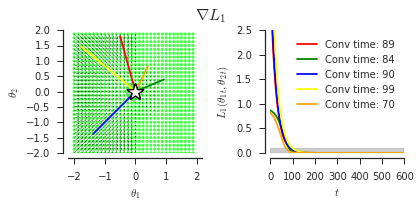}
    \includegraphics[width=0.48\textwidth]{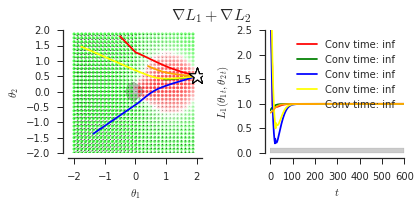}
    \includegraphics[width=0.48\textwidth]{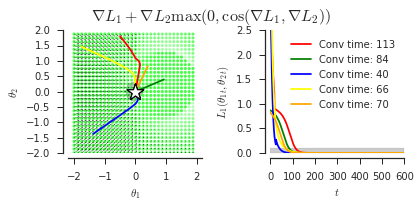}
    \includegraphics[width=0.48\textwidth]{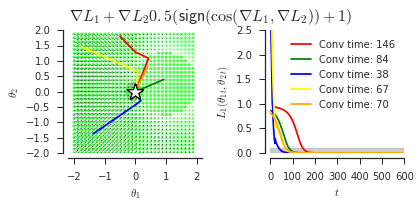}
    \caption{Negative example optimization for $L_1(\theta) = (\theta_1 < 0) ( \theta_1^2 + \theta_2^2 ) + (\theta_1>0)  \Bigl(1-\exp\bigl( -2(\theta_1^2 + \theta_2^2) \bigr)\Bigr)$ and $L_2(\theta) = (\theta_1 - 2)^2 + (\theta_2 - 0.5)^2$ where the proposed method slows down the process (compared on red runs). For the ease of presentation, we choose $L_1$, which is non-differentiable/smooth when $\theta=0$. But one can create any smooth functions with analogous properties. The core idea is, when there exists a flat region on the loss surface, the auxiliary lost tends to push the iterates to this region. Even though this move still decreases the loss (i.e.,  convergence is guaranteed), the optimization process will be slowed down.}
    \label{fig:aux-negative}
\end{figure}

\section{Weighted Version of Our Method}\label{sec:pseudocode}

Algorithm~\ref{alg:usecosine2} describes the \emph{weighted} version of our method.

\setcounter{algorithm}{1}
\begin{algorithm*}[ht]
 \caption{Weighted version of our method. %
 }
 \label{alg:usecosine2}
 \begin{algorithmic}[1]
 \State Initialize shared parameters $\paramshared$ and task specific parameters $\paramsone, \paramstwo$.
 randomly.
 \For{$\mathsf{iter}=1:\mathsf{max\_iter}$}
     \State Compute $\nabla_{\paramshared} \mainloss, \nabla_{\paramsone} \mainloss$, $\nabla_{\paramshared} \auxloss, \nabla_{\paramstwo} \auxloss$.
     \State Update $\paramsone$ and $\paramstwo$ using corresponding gradients
     \State Update $\paramshared$ using $\nabla_{\paramshared} \mainloss + \max(0, \cos(\nabla_{\paramshared} \mainloss, \nabla_{\paramshared} \auxloss)) \nabla_{\paramshared} \auxloss$
 \EndFor
 \end{algorithmic}
\end{algorithm*}

\section{Experimental Details} 
\label{sec:moreexps}

We present in this section experimental details for the ImageNet classification task, the RL gridworld task, and the RL Atari game task.  

\subsection{Identifying Near and Far Classes in ImageNet}
\label{sec:imagenet:classes:picking}

 \begin{figure}[th]
 \begin{center}
  \includegraphics[width=1\textwidth]{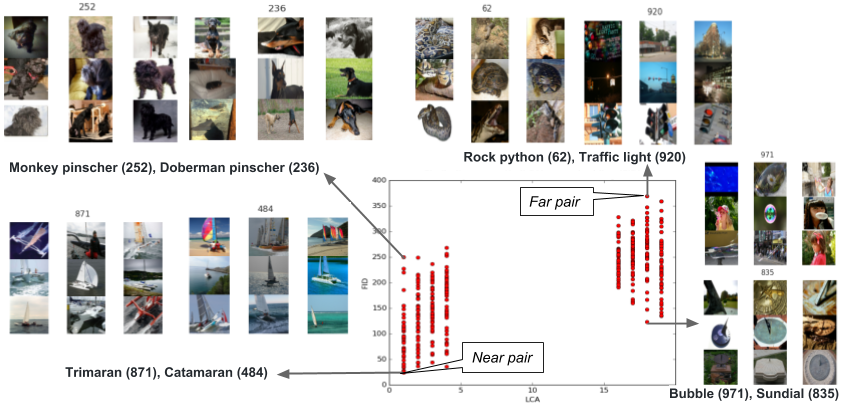}
 \caption{LCA ($x$-axis) versus FID ($y$-axis) as a ground truth for class similarity. The measurements reflect human intuition of class similarity; \emph{trimaran and catamaran} (bottom-left) are similar both visually and conceptually, whereas \emph{rock python and traffic light} (top-right) are dissimilar both visually and conceptually.
 }
 \label{fig:lca-fid}
 \end{center}
 \end{figure}

As a ground truth for class similarity, we identify pairs of ImageNet classes to be near or far using, \emph{lowest common ancestor (LCA)} and \emph{Frechet Inception Distance (FID)} \citep{fid}. 

ImageNet follows a tree hierarchy where each class is a leaf node. We define the distance between a pair of classes as at which tree level their LCA is found. In particular, there are 19 levels in the class tree, each leaf node (i.e. class) is considered to be level 0 while the root node is considered to be level 19. We perform bottom-up search for one pair of random sampled classes and find their LCA node---the class distance is then defined as the level number of this node.  For example, class $871$ (``trimaran'') and class $484$ (``catamaran'') has class distance 1 because their LCA is one level up. %

FID is used as a second measure of similarity. We obtain the image embedding of a pair of classes using the penultimate layer of a pre-trained ResNetV2-50 model \citep{he2016identity} and then compute the embedding distance using FID, defined in \cite{fid} as: 
\begin{align}
      \mathrm{FID} =  d^2\bigl((m_1, C_1), (m_2, C_2)\bigr) = \|m_1-m_2\|_2^2 + \mathrm{Tr}\bigl( C_1 + C_2 - 2(C_1C_2)^{1/2} \bigr).
  \end{align}
where $m_k, C_k$ denote the mean and covariance of the  embeddings from class $k$. 

We randomly sampled 50 pairs of classes for each level of $LCA=\{1, 2, 3, 4, 16, 17, 18, 19\}$ (400 pair of classes in total) and compute their FID. Figure~\ref{fig:lca-fid} shows a plot of LCA (x-axis) verses FID (y-axis) over our sampled class pairs. It can be seen that LCA and FID are (loosely) correlated and that they reflect human intuition of task similarity for some pairs. For example, \emph{trimaran and catamaran} (bottom-left) are similar both visually and conceptually, whereas \emph{rock python and traffic light} (top-right) are dissimilar both visually and conceptually. However, there are contrary examples where LCA disagrees with FID; \emph{monkey pinscher and doberman pinscher} (top-left) are visually dissimilar but conceptually similar, whereas \emph{bubble and sundial} (bottom-right) are visually similar but conceptually dissimilar. Per the observations, in subsequent experiments we pick class pairs that are \emph{\{Low LCA, Low FID\}} as \emph{near} pairs (e.g., trimaran and catamaran), and class pairs that are \emph{\{high LCA, high FID\}} as \emph{far} pairs (e.g., rock python and traffic light).

\subsection{Gridworld Experiments}
\label{sec:maze}
 We define a distribution over $15\times15$ gridworlds, where an agent observes its surrounding (up to 4 pixels away) and can move in 4 directions (with 10\% transition noise).
We randomly place walls (blocking movement) as well as 
two types of positive rewards: $+5$ and $+10$ points, both terminating an episode. There are also some negative rewards (both terminating and non-terminating) to make problem harder. 
In order to guarantee (expected) finite length of episodes we add fixed probability of 0.01 of transitioning to a non-rewarding terminal state.

For the sake of simplicity we use episode-level policy gradient \citep{reinforce} with value function baseline, with policies parametrized as logits $\paramshared$ of $\pi(a|s) = \tfrac{\exp( \theta_{s,a} )}{\sum_b \exp( \theta_{s,b})} \in [0,1]$, baselines as $B_s \in \mathbb{R}$, with fixed learning rate of $\alpha=0.01$, discount factor $\gamma=0.95$ and 10,000 training steps (states visited).

For this setup, the update rule for each sequence $\tau = \bigl((s_1, a_1, r_1), \dots (s_N, a_N, r_N)\bigr)$ is thus given by
$$
\Delta \paramshared =  \alpha \nabla_\paramshared \log \pi(a_\iterrltime | s_\iterrltime ) \left [ \sum_{\iterrl=0}^{N-\iterrltime} r_{\iterrltime+\iterrl}  - B_{s_\iterrltime} \right ] = \alpha G^\iter \;\;\;\;\;\;\;\;
\Delta B_{s_\iterrltime} =  - \alpha  \nabla_{B_{s_\iterrltime}} (B_{s_\iterrltime} -  \sum_{\iterrl=0}^{N-\iterrltime} r_{\iterrltime+\iterrl})^2.
$$

In order to make use of expert policies for $\auxtask$ we define auxiliary loss as a distillation loss, which is just a per-state cross-entropy between teacher's and student's distributions over actions. 
If we just add gradients estimated by policy gradient, and the ones given by distillation,
the update is given by 
$$
\Delta \paramshared =  \alpha \left [ G^\iter - \nabla_\paramshared \text{H}^\times( \pi^\text{Q}(\cdot | s_\iterrltime ) \| \pi(\cdot | s_\iterrltime) ) \right ]
=  \alpha  [ G^\iter +   \sum_a \pi^\text{Q}(a | s_\iterrltime ) \nabla_\paramshared \log \pi(a | s_\iterrltime)  ],
$$
where $V^\iter=\sum_a \pi^\text{Q}(a | s_\iterrltime ) \nabla_\paramshared \log \pi(a | s_\iterrltime) $ 
and $\text{H}^\times(p,q) = -\sum_k p_k \log q_k$ is the cross entropy.

However, if we use the proposed gradient cosine similarity, we get the following update
$$
\Delta \paramshared
= \alpha \left [ G^\iter + V^\iter \bigl( 2 \cdot  \sign( \cos(G^\iter, V^\iter) ) - 1\bigr) \right ].
$$ 
This get a significant boost to performance, and policies that score on average \textbf{3} points after 10,000 steps and obtain baseline performance after just one third of steps. Figure~\ref{fig:distillcos} shows a depiction of the task and an example solution. 

\begin{figure}[t]
    \centering
    \includegraphics[width=0.24\textwidth]{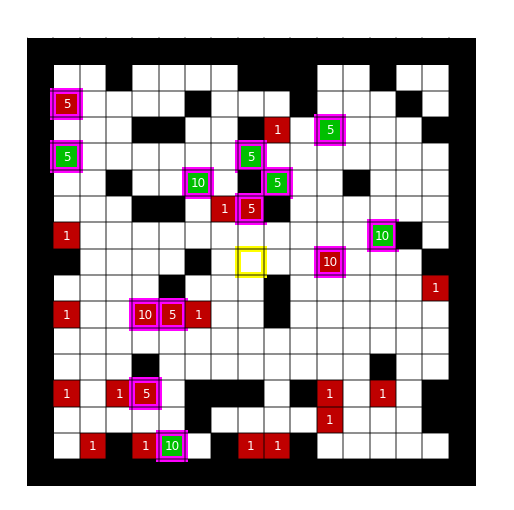}
    \includegraphics[width=0.24\textwidth]{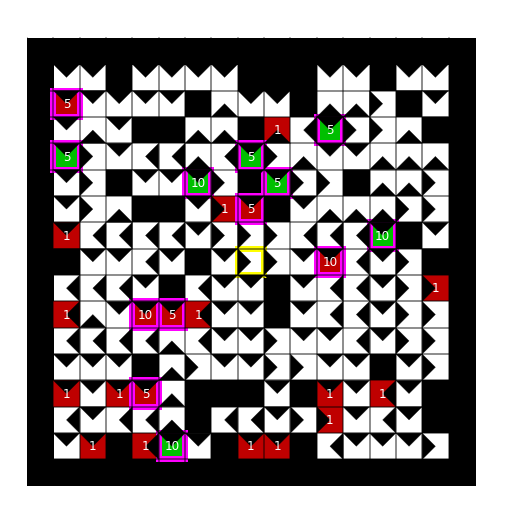}
    \includegraphics[width=0.24\textwidth]{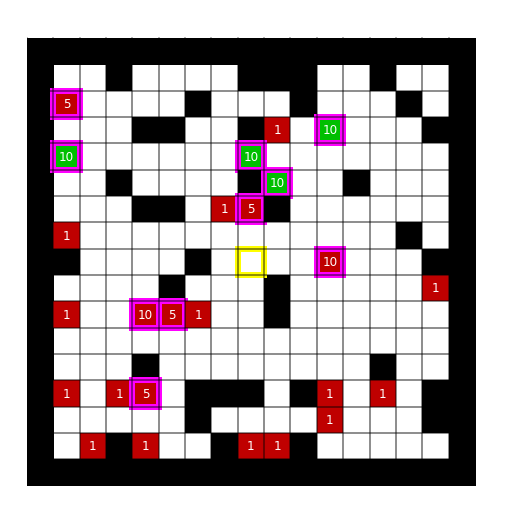}
    \includegraphics[width=0.24\textwidth]{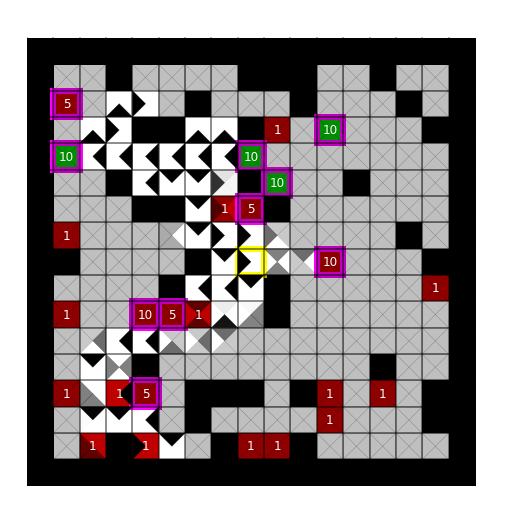}
    \caption{\textbf{Left most}: Initial task $\maintask$, yellow border denotes starting point and violet ones terminating states. Red states are penalizing with the value in the box while the green ones provide positive reward. \textbf{Middle Left}: Solution found by a single run of Q-learning with uniform exploration policy. \textbf{Middle Right}: Transformed task $\auxtask$. \textbf{Right most}: Solution found by gradient cosine similarity driven distillation with policy gradient.
    }
    \label{fig:distillcos}
\end{figure}

\subsection{Atari Experiments}\label{sec:atari}

For all Atari game experiments, we use a convolutional architecture as in previous work \citep{impala, popart, mnih2015human, mnih2016asynchronous}, trained with batched actor-critic with the V-trace algorithm \citep{impala}. We use a learning rate of $0.0006$ and an entropy cost of $0.01$ for all experiments, with a batch size of $32$ and $200$ parallel actors. 

For the single game experiment, Breakout, we use 0.02 for the threshold on the cosine similarity and, for technical reasons we ended up computing the cosine distance on a per-layer basis and then averaged. %
We additionally need to do a moving average of the cosine over time ($0.999c^\iterminusone + 0.001c^\iter$) to ensure there are no sudden spikes in the weighting due to noisy gradients.
Same setting is used for the multi-task experiment, just that the threshold is set to 0.01.

\section{Gradient Cosine Similarity in High Dimensions}\label{sec:cosine-high-dimensions}

\begin{figure}[ht]
\begin{center}
\includegraphics[width=0.9\textwidth]{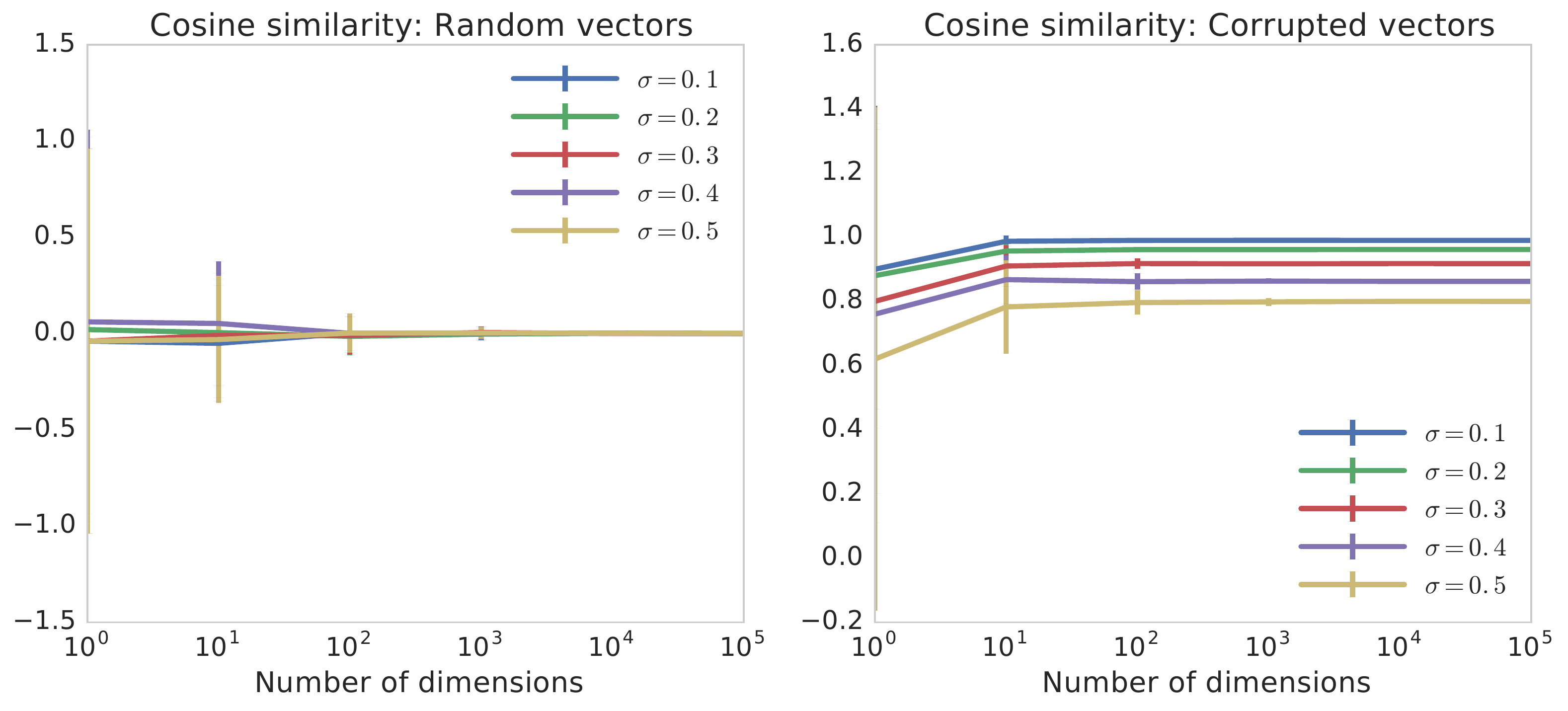}
\caption{Cosine similarity as a function of dimensionality. \emph{Left}: we generate two random vectors $\theta_1$ and $\theta_2$ from a Gaussian distribution with zero mean and variance $\sigma^2$  and as expected, the cosine similarity drops to zero very quickly as the number of dimensions increases. \emph{Right}: we mimic a scenario where the true gradients of the main and auxiliary are aligned, however we observe only \emph{corrupted} noisy gradients which are noisy copies of the true underlying vector; we generate $\mu\sim \mathcal{N}(0,I_d)$ and generate $\theta_1\sim\mathcal{N}(\mu,\sigma I_d)$ and $\theta_2\sim\mathcal{N}(\mu,\sigma I_d)$. In this case, cosine similarity is larger in higher dimensions (as the inner product of the corruption noise goes to zero).
}
\label{fig:cosine-similarity-random-vs-corrupted}
\end{center}
\end{figure}

As discussed in Section \ref{sec:discussion} of the paper, we empirically show in Figure \ref{fig:cosine-similarity-random-vs-corrupted} that if two gradients are meant to be co-linear (e.g., tasks share some underlying structure), the cosine similarity can still be a potential way of measuring task similarity even in high dimensions.

\section{Overparametrization and reliability of gradient similarity globally}\label{sec:overparam}
As discussed in Section~\ref{sec:discussion} of the paper, in theory it is not clear that moving in an ascent direction of the main loss locally (e.g. for a small finite number of steps) has negative consequences on speed of convergence when the main loss is non-convex as in the case of neural networks. 

For example the auxiliary loss could drive the model outside of the convex bowl (basin of attraction) of a suboptimal minima, helping the neural network convert to a better solution for the main loss, even though locally it will move in an ascent direction on the main loss. In general if we think of the auxiliary loss as some form of regularization, one interpretation of regularization terms is that it is needed to prevent overfitting the data. That implies that it will, at some point in training, stop the model to move in the descent direction of the training error which would lead to memorization of the training set. 

However we observe that in practice, as a heuristic,
gradient similarity seems to work reasonably well.
While we do not have an understanding of the efficiency of our heuristic, we hypothesize that 
overparametrization of neural networks might play 
an important role in the explanation. For example, 
it is well understood that overparametrization makes it less likely for the loss surface to exhibit suboptimal local minima. Therefore the explicit case of needing to move in ascent direction in order to escape the basing of attraction of a suboptimal solution becomes less likely. The lack of bad local minima however does not exclude that moving in an ascent direction can potentially have positive effects globally. For example, one can imagine moving in a descent direction forcing learning to traverse a plateaux until it reaches a minima, while moving in a ascent direction leads to a steep valley. The minima reached on both direction could be connected (as most minima for neural networks seem to be \cite{garipov2018loss,draxler2018essentially}) and hence this is not a case of being stuck in a suboptimal solution. While such structures of the loss surface have not been investigated, and they could happen due to the non-linearity, we believe that overparametrization makes them less likely. We leave exploring this hypothesis as an open question for future work.

\end{document}